\newcommand{\e}{\epsilon}
\begin{document}
\title{Fair Algorithm Design: Fair and Efficacious Machine Scheduling}
%
%
\author{April Niu\orcidID{0000-0002-6035-4850} \and
Agnes Totschnig\orcidID{0009-0004-7716-8183} \and
Adrian Vetta\orcidID{0000-0002-2213-4937}}
\authorrunning{A. Niu et al.}
%
\institute{McGill University, Canada\\
\email{yuexing.niu@mail.mcgill.ca, agnes.totschnig@mail.mcgill.ca, adrian.vetta@mcgill.ca}}
\maketitle              
\begin{abstract}Motivated by a plethora of practical examples where bias is induced by automated decision-making algorithms, there has been strong recent interest in the design of fair algorithms. However, there is often a dichotomy between {\em fairness} and {\em efficacy}: fair algorithms may proffer low social welfare solutions whereas welfare optimizing algorithms may be very unfair. This issue is exemplified in the machine scheduling problem where, for $n$ jobs, the social welfare of any fair solution may be a factor $\Omega(n)$ worse than the optimal welfare. In this paper, we prove that this dichotomy between fairness and efficacy can be overcome if we allow for a negligible amount of bias: there exist algorithms that are both ``almost perfectly fair" and have a constant factor {\em efficacy ratio}, that is, are guaranteed to output solutions that have social welfare within a constant factor of optimal welfare. Specifically, for any $\epsilon>0$, there exist mechanisms with efficacy ratio $\Theta(\frac{1}{\epsilon})$ and where no agent is more than an $\epsilon$~fraction worse off than they are in the fairest possible solution (given by an algorithm that does not use personal or type data). Moreover, these bicriteria guarantees are tight and apply to both the single machine case and the multiple machine case. The key to our results is the use of {\em Pareto scheduling mechanisms}. These mechanisms, by the judicious use of personal or type data, are able to exploit Pareto improvements that benefit every individual; such Pareto improvements would typically be forbidden by fair scheduling algorithms designed to satisfy standard statistical measures of group fairness. We anticipate this paradigm, the judicious use of personal data by a fair algorithm to greatly improve performance at the cost of negligible bias, has wider application.
\end{abstract}

\section{Introduction}\label{sec:intro}
The boom in automated decision-making is producing transformative effects on society. In principle, automation should produce lower cost and higher performance outcomes. Moreover, a common presumption was that the use
of algorithms based upon big data sets would eliminate the inherent bias in human decision-making. Unfortunately, that has not been the case. Indeed, it appears that in many instances automated decision-making has allowed bias to emerge on a huge scale and also to institutionalize partiality. 
A particularly nefarious example concerns judicial sentences based upon algorithms that predict the risk of reoffending~\cite{KK17,Eijk17}; these risk scores have been claimed to exhibit strong racial bias. 
Two more classical examples concern bias in the assignment of
credit scores~\cite{LDR18,Hassani21} and in automated hiring systems~\cite{RBK20}.
Indeed, as technology advances further, such as in facial recognition, the scope and potential dangers of automated decision-making are expanding dramatically~\cite{BG18,RB19}. 
Consequently, the issue of fair algorithms is now 
receiving a considerable amount of attention in the law, politics, philosophy, statistics and, especially, machine learning communities.

But what is a fair algorithm?
There is a subtlety here. There are two predominant, but quite distinct, ways to view fairness. First, an algorithm may be considered fair if it produces outcomes that are unbiased across groups. This is the underlying framework when fairness measures based 
on statistical criteria are applied. Second, an algorithm may be considered fair if its internal workings are unbiased. In particular, such an algorithm cannot use any personal or type information (such as race, sex, sexual orientation, wealth, class, etc.) nor use any correlated attributes.
Our approach is to combine these two viewpoints: we present a measure to assess fairness across groups (including individual fairness); furthermore, this measure is calibrated against the fairest outcome 
achievable by a mechanism that is not allowed to use personal information.

But our purpose is not to proffer a new measure of fairness; there is already an abundance of such definitions in the literature~\cite{BHN19}.
Nor is our aim simply to design an algorithm that computes fair solutions with respect to the measure. Our objective is more ambitious. We desire fair algorithms that also guarantee high performance. The grail is an algorithm that produces outcomes that are both unbiased {\bf and} efficacious (that is, of high social welfare).
Specifically, we are searching for an algorithm $\mathcal{A}$ that outputs
solutions that are (i) comparable in performance to those produced by the social welfare maximizing algorithm $\mathcal{A}^*$, and (ii)
comparable in fairness to those produced by the fairest algorithm $\mathcal{A}^f$. The reason that our proposed measure is useful is because it will naturally allow us to make such bicriteria comparisons. 

The reader may question whether this objective is achievable since there is often a dichotomy between fairness and social welfare. 
For proof of concept, in this paper we examine the machine scheduling problem.
In machine scheduling, personal type data concerns properties of the job
to be scheduled, specifically, size. (We remark that whilst size may appear a somewhat trifling characteristic in comparison to race or sex it does have important real-world consequences. For example, net-neutrality is based upon the idea that bandwidth allocation algorithms should not discriminate against agents based upon their size.)
Unfortunately, we will see that machine scheduling does indeed suffer from the aforementioned dichotomy: a fair scheduling algorithm that does not use the type data of an agent can produce arbitrarily poor outcomes from a social welfare perspective. Conversely, an optimal scheduling algorithm for social welfare can be arbitrarily unfair. This result appears fatal for our project.
But, amazingly, this dichotomy can be circumvented
by allowing for a {\em negligible} amount of bias. Specifically, we prove that very good social outcomes (within a constant factor of optimal)
can be obtained by algorithms that are near-perfectly fair (within a $1+\epsilon$ factor of perfect fairness, for any constant $\epsilon>0$).

\subsection{Overview and Results}
In Section~\ref{sec:background} we provide a brief history concerning the need for fair algorithms and discuss related works in the field.
In Section~\ref{sec:single} we present the classical machine scheduling problem with $n$ jobs (agents). We first explain how the optimal algorithm for social welfare $\mathcal{A}^*$ and the fairest algorithm $\mathcal{A}^f$ perform in the special case of a single machine. We then prove a strong dichotomy between fairness and efficacy by showing the fairest solution
can have social cost a factor $\Omega(n)$ worse than that of the optimal solution (see Example I). This motivates the study of $\epsilon$-fair mechanisms.
These mechanisms have the property that no individual is allowed to be more than an $\epsilon$ fraction worse off than they would be if the fairest algorithm was used. Since the guarantee applies at the individual level it also applies to any group as a whole.

In Section~\ref{sec:Pareto} we present a class of scheduling algorithms called {\em priority scheduling mechanisms}. We prove that it suffices to consider only a sub-class of these mechanisms, namely {\em Pareto scheduling mechanisms}, when searching for an algorithm that is both fair and efficacious. Specifically, these mechanisms can be used to output Pareto optimal solutions.\footnote{An outcome is Pareto optimal if no other outcome exists where at least one agent is better off and no agents are worse off.}
Indeed, the focus on Pareto solutions is also necessary: algorithms based upon statistical group fairness measures may output poor social solutions precisely because they are, typically, forced to output solutions that are Pareto dominated.

Then in Section~\ref{sec:ratio}, given a target fairness $\epsilon>0$, we prove that there is always a
Pareto optimal scheduling mechanism that is both $\epsilon$-fair and 
provides social cost within a factor of $\frac{1}{4\epsilon}$ of optimal.
Furthermore, we show that these bicriteria guarantees are the best possible.
Thus, by allowing a negligible amount of bias, high quality social outcomes can be guaranteed. This is somewhat analogous to differential privacy where, essentially, an individual's data can be included in a database if it does not comprise her individual privacy by more than an $\epsilon$ fraction, but with the bonus here that (in using personal data judiciously) we also obtain strong social welfare guarantees.
Finally, in Section~\ref{sec:multiple} we show how to extend these results to the setting of multiple machines with exactly the same bicriteria guarantees.

\section{Background and Related Work}\label{sec:background}

The potential for bias in automated decision-making systems has become a 
critical issue recently in a range of disparate applications.
Prominent examples include credit score evaluations~\cite{BN21,Hassani21,LDR18}, automated recruitment~\cite{RBK20,LT19}, judicial sentencing~\cite{KK17,Eijk17,Propublica161,Propublica162}, community policing~\cite{LI16,EFN18}, advertising~\cite{LT19}, medical treatments~\cite{OPV19,Igoe21}, social services~\cite{CBF18}, etc.
Moreover, these issues have arisen in even the largest and most influential
technology companies.
For example, Amazon's recruitment engine was found to prefer men over women~\cite{dastin18} and its same-day delivery algorithms produced outcomes that disadvantaged communities with large ethnic minority groups~\cite{Bloomberg16}.  
Gender stereotypes abound in Google News~\cite{BCZ16}. Topically, Twitter recently announced that its amplification algorithms are severely biased in favour of right-wing politicians~\cite{HKB22}.
Bias in facial recognition software could in the future have calamitous ramifications, so it is worrying that the commercial gender classifiers of IBM, Microsoft, and Megvii (Face++) perform the worst with darker skin females and better with males than females~\cite{BG18,RB19}.  
Another example is given by the advertisement algorithms of Facebook; these are specialized for target audiences, leading to lawsuits for violations of multiple civil rights laws~\cite{SGKMR19}.

Given the breadth and importance of these applications, the issue of algorithmic fairness has consequently received a considerable amount of attention in the law, politics, philosophy, statistics and, especially, machine learning communities.
In this section we will provide only a sample overview of the literature and also highlight the prior research most pertinent to our work; we refer the curious reader to the the book {\em Fairness and Machine Learning} \cite{BHN19} for a much more extensive and technical guide to the existing literature. 

The first problem facing the academic community is to decide whether or not an algorithm is fair. Therefore, a major focus has been to devise tests based upon measures of fairness.
These have been predominantly group-based statistical measures including the
influential measures of {\em independence}, {\em separation} and
{\em sufficiency}. The origins of these criteria date back to the works of
Darlington~\cite{Darlington71} and Cleary~\cite{Cleary68}.
These measures are commonly termed
{\em group fairness/statistical parity} ~\cite{DHP11}, {\em equalized odds} ~\cite{HPS16}, and {\em calibration within groups}
\cite{chouldechova16}, respectively.
Furthermore, there are now dozens of fairness criteria in the computer science literature closely related to just independence, separation and sufficiency 
alone; see Barocas et al.~\cite{BHN19}. Follow-up works based on group and individual fairness definitions abound. Dwork et al.~\cite{DIJ20} studied the composition of cohort pipelines under various individual fairness conditions. The notion of multi-group fairness is defined in Rothblum and Yona~\cite{RY21} where a multi-group agnostic PAC learning algorithm is proposed such that the loss on every sub-population is not much worse than the minimal loss for that population. A reduction from multi-group agnostic PAC learning to outcome indistinguishable learning \cite{DKRRY21} is used to obtain such an algorithm. In addition, in Hebert-Johnson et al.~\cite{HKRR18}, it is argued that calibration is not enough for achieving fairness; multicalibration is proposed to compute identifiable subpopulations within a collection, on which learning algorithms output predictions that are highly accurate. This model is further used in predicting COVID-19 mortality risk where multicalibration is combined with other baseline models~\cite{BRALFYGSSBRSNBD20}. Other measures based on 
causal reasoning and the use of counterfactuals have also been proposed~\cite{KLR17,KRPHJS17,NS18}. A stronger theoretical result~\cite{DIRS20} gives an extraction procedure capable of learning from a fairness oracle with an arbitrary fairness condition.


A consequence of the abundance of tests is that these measures may be mutually incompatible~\cite{Darlington71}; see also Kleinberg et al.~\cite{KMR17} and Chouldechova~\cite{chouldechova16}. This can make assessments of bias somewhat subjective. Consider, for example, {\em risk assessment scores}. These are widely used in the criminal justice system in the US in assigning bail or remand (pre-trial detention)~\cite{KK17}. ProPublica~\cite{Propublica161,Propublica162} found that {\sc compas} risk scores, generated using software by Northpointe, were strongly biased against African-Americans. Specifically, African-Americans were twice as likely to be falsely flagged as high risk, while Caucasian defendants were more often mislabeled as low risk. 
Northpointe~\cite{dieterich2016compas} counterargued these findings by claiming that the {\sc compas} risk scores were fair because they satisfy the sufficiency criteria.
More detailed discussion on this case and on the incompatibilities and trade-offs between popular statistical fairness criteria can be found in Berk et al~\cite{BHS17}. 

The shift from human to automated decision-making has also added a degree of opacity, as most predictors lack interpretability, tracability and auditability. We tend to view technology as an impartial prediction tool that is less prone to making mistakes than the human judgments it replaces and, consequently, assign it unwarranted legitimacy. Another major issue with black-box predictors is the diffusion of responsibility and the loss of accountability that ensues from a lack of clarity in the automated decision process~\cite{LI16}. The trade-offs between different predictors with respect to interpretability, accuracy and fairness are discussed in Chouldechova et al.~\cite{CBF18} for the case of child maltreatment hotline screening processes. A more general discussion about screening decisions and fairness can be found in Rambachan
et al.~\cite{RKLM20}.

In addition to modelling and assessing fairness, a major research agenda
concerns the practical issues that arise with real-world data.
These include the collection, appropriate usage, and dynamic maintenance of data.
Consider the findings of Holstein et al.~\cite{HWD19} based upon interviews with ML industrial practitioners. In practice, datasets are often both incomplete and inappropriate for assessing fairness. Thus guidance is needed in the data collection
stage. Moreover, even if important sensitive data (such as type data) has been collected, it may be withheld from the industrial teams actually developing and running the algorithms, making accurate auditing impossible. Furthermore, humans are involved throughout the development pipeline (including in data collection and data labelling for training)
leading to the possible incorporation of human bias. Also, unlike in hiring or judicial sentencing, in many industrial applications the outcomes and objectives are more fuzzy and opaque (e.g. chatbots, tutoring, etc) so are less conducive for evaluation by standard statistical fairness measures.
For such practical reasons, thorough auditing and documentation is proposed as  essential in remedying bias in computer programs~\cite{RBK20,MWZ19}.

As alluded to, great care has to be taken with real-world data because there may be many components that are highly correlated with a critical or protected attribute.
In particular, the naive approach of ``{\em fairness through unawareness}'', where the protected attribute is removed, is insufficient to guarantee fair treatment~\cite{RD20}. 
The dynamic nature of data collection can itself be problematic, for example, in causing negative feedback loops, which can amplify the bias contained in the data sets. Consider a case study on predictive policing in Oakland conducted by Lum and Isaac \cite{LI16}: a software recommendation of increased policing in a region will lead to an increase in recorded crimes in the dataset; this, in turn, can lead to a recommendation of further increases in policing for that area, etc; see Ensign et al.~\cite{EFN18} for a mathematical model of this phenomenon.
Negative feedback loops may also arise in automated hiring systems, where positive information may be added to the database for successful candidates whereas no or negative information may be added for candidates the software rejected~\cite{wachter17}.
Similar feedback-loop effects have also been observed in credit score calculations~\cite{BN21} and in the polarization of social networks~\cite{SLL21}.


Let us conclude this overview by discussing the fairness literature most pertinent to our work. Mullainathan~\cite{M18} discusses the relationships between social welfare and fairness. Dwork et al.~\cite{DHP11} highlight two aspects of particular relevance here. One, they emphasize the importance of
fairness measures that extend beyond group fairness and apply at the level of an individual. They do this using a Lipschitz condition on the classifier.
Two, they propose a bicriteria approach, namely utility optimization subject to
a fairness constraint. 
Our approach follows this framework. In essence, we wish to optimize welfare (minimize social cost) subject to individual fairness constraints, namely $\epsilon$-fairness.

Moreover, we further desire quantitative bicriteria performance guarantees. This is
analogous to the concept of the {\em price of fairness} introduced independently by Caragiannis et al.~\cite{CKK12} and
Bertsimas et al.~\cite{BFT11}. 
The price of fairness quantifies the loss in social welfare (economic efficiency) incurred under the fairness scheme. Specifically, for a minimization problem, it is the worst case ratio between the cost of the ``best'' fair solution and the cost of the most efficient solution.\footnote{Bertsimas et al.~\cite{BFT11} actually use a measure set equal to one minus this ratio, but that is of no significance here.}
We remark that this approach has wide application depending upon the definition of fairness used. For example, in Caragiannis et al.~\cite{CKK12} the problem studied is the classical economic problem concerning the fair allocation of a collection of goods, where an outcome
is deemed fair if it satisfies the properties of proportionality, envy-freeness, or equitability. For proof of concept, we have chosen to study the machine scheduling problem with $n$ agents and $m$ machines. The price of fairness in machine scheduling has been studied for the special case of two-agents on a single machine~\cite{ACN19,ZZ20}.

Our main result can be viewed through the price of fairness lens. Specifically, for machine scheduling, the price of fairness for the class of perfectly fair ($0$-fair) schedules is $\Omega(n)$ but the price of fairness for the class of near-perfectly fair ($\epsilon$-fair) schedules is only $\Theta({1}/{\epsilon})$ for any $\epsilon>0$.

\section{The Machine Scheduling Problem}\label{sec:single}

In the {\em machine scheduling problem} there are $n$~agents who wish to schedule
a single job each on one of $m$~machines. 
Agent~$i$ has a job of {\em size} (duration) $d_i\ge 0$ and its objective is to minimize
the {\em completion time} $c_i(\mathcal{A})$ of its job, which is a function of the
(possibly, randomized) assignment mechanism $\mathcal{A}$ used to assign the jobs to the machines. The {\em social cost} of an assignment is the sum of the (expected) completion times.
Our objective is to design a {\em fair} allocation mechanism that performs well in comparison to the optimal mechanism for social cost. To begin, in Sections~\ref{sec:single} to~\ref{sec:ratio}, we will focus on the case of a single machine, namely $m=1$.
Using the lessons derived from the single machine case, we will study the general case of 
multiple machines, that is $m\ge 2$, in Section~\ref{sec:multiple}. 

\subsection{Fair Schedules and Optimal Schedules}\label{sec:definitions}

Consider the case of a single machine. Here the completion time for agent $i$ is simply the sum of its size plus the sizes of the jobs that are scheduled on the machine before it is.
It is now easy to compute both the fairest schedule and the optimal schedule.
The optimal mechanism $\mathcal{A}^*$ for social cost is given by {\tt Smith's Rule}~\cite{Smi56}: {\em schedule the jobs in increasing order of size}. On the other hand, recall that a fair mechanism may not use any private characteristic of the agent.
In this case, the only such characteristic is size. Thus, for machine scheduling a fair mechanism must treat the agents identically. 
In particular, for any pair of jobs, job $i$ should be equally likely to appear before or after job $\ell$.
Ergo, the fair solution is given by the
{\tt Random Assignment Rule}: {\em schedule the jobs in random order}.
We remark that for machine scheduling on a single machine there is a unique fair solution.
Hence for this application it makes sense to refer to the random assignment rule as the {\em fairest mechanism} $\mathcal{A}^f$. 
(For the multiple machine setting we will see that there is more than one fair mechanism. So, for the purpose of comparison, there we define the fairest mechanism $\mathcal{A}^f$ to be the mechanism with the best social welfare among the set of all fair mechanisms.)

We may now calculate the social cost of the optimal assignment mechanism
$\mathcal{A}^*$. Without loss of generality, label the agents such that
$d_1\le d_2\le \cdots \le d_n$. Then, under Smith's Rule, agent $i$ has
completion time $c_i(\mathcal{A}^*) = \sum_{\ell=1}^i d_{\ell}$. 
It follows immediately that the social cost of the optimal assignment is
\begin{equation}\label{eq:opt}
  c(\mathcal{A}^*) 
  \ =\ \sum_{i=1}^n c_i(\mathcal{A}^*) 
  \ =\ \sum_{i=1}^n \sum_{\ell=1}^i d_{\ell}
  \ =\ \sum_{i=1}^n (n-i+1)\cdot d_i.
\end{equation}
Next, let's compute the expected social cost of the fairest mechanism $\mathcal{A}^f$.
Define $D=\sum_{i=1}^n d_i$ to be the sum of the job sizes.
Under the randomized assignment mechanism, job $\ell$ will appear before
job $i$ with probability exactly~$\frac12$. Thus the expected completion
time of job $i$ is exactly
\begin{equation}\label{eq:fair-i}
  c_i(\mathcal{A}^f) 
  \ =\ d_i+\sum_{\ell\neq i} \frac12\cdot d_{\ell}
  \ =\ \frac12\cdot (D+ d_i ).
\end{equation}
Consequently, the expected social cost of the fairest mechanism is
\begin{equation}\label{eq:fair}
  c(\mathcal{A}^f) 
  \ =\ \sum_{i=1}^n c_i(\mathcal{A}^f)
  \ =\ \sum_{i=1}^n \frac12\cdot (D+ d_i )
  \ =\ \frac12\cdot \sum_{i=1}^n D + \frac12\cdot\sum_{i=1}^n d_i
  \ =\ \frac12 D\cdot (n+1)  .
\end{equation}
We can now formalize how to measure the trade-off between fairness and efficacy. A standard approach to do this is via the {\em price of fairness}, due to
Caragiannis et al.~\cite{CKK12}; see also Bertsimas et al.~\cite{BFT11}.
This is the maximum ratio between the social cost of the best fair solution ($\min_{S\in \mathcal{F}(I)} c(S)$) and that of the optimal solution $c(S^*)$ over all possible instances $\mathcal{I}$. Formally, 
let $\mathcal{I}$ be the set of instances and let $\mathcal{F}(I)$ be
the set of solutions that satisfy the proscribed fairness criteria for an instance $I\in \mathcal{I}$.
Then the price of fairness is
\begin{equation}\label{eq:PoF-1}
  \Phi \ = \ \max_{I\in \mathcal{I}} \, \min_{S\in \mathcal{F}(I)} \, \frac{c(S)}{c(S^*)}.
\end{equation}
We may also define the {\em price of fairness of a mechanism} $\mathcal{A}$
as the worst case ratio, over all possible instances, of the social cost of the solution output by the mechanism compared to that of the optimal mechanism $\mathcal{A}^*$.
\begin{equation}\label{eq:PoF-2}
  \Phi(\mathcal{A}) \ = \ \max_{I\in \mathcal{I}} \ \frac{c(\mathcal{A})}{c(\mathcal{A}^*)}.
\end{equation}
We also call $\Phi(\mathcal{A})$ the {\em efficacy ratio} (or {\em social welfare ratio}) of the mechanism $\mathcal{A}$. For machine scheduling, we will see in Section~\ref{sec:Pareto} that we can compute (\ref{eq:PoF-1}) from (\ref{eq:PoF-2}). That is, under the fairness measure considered, we can find a mechanism $\mathcal{A}$ that for any instance $I$ outputs the best fair solution in $\mathcal{F}(I)$.

\subsection{The Dichotomy between Fairness and Efficacy}\label{sec:bad-example}
Unfortunately, the following example shows that achieving an assignment mechanism that is {\bf both} fair and efficacious is a chimera.

\begin{quote}
{\tt Example I.} Let there be $n-1$ small jobs of size $1$ and one large job of size
$d_n\gg 1$. By~(\ref{eq:opt}), the optimal social cost is 
\begin{equation}\label{eq:bad-opt}
c_i(\mathcal{A}^*) 
\ =\ \sum_{i=1}^n (n-i+1)\cdot d_i
\ =\  d_n+ \sum_{i=1}^{n-1} n-i+1
\ =\ d_n+ \sum_{j=2}^{n} j
\ =\ d_n+ \frac12n(n+1)-1.
\end{equation}
In contrast, by (\ref{eq:fair}), the social cost of the fairest mechanism is
\begin{equation}\label{eq:bad-fair}
c_i(\mathcal{A}^f) 
\ =\ \frac12 D\cdot (n+1) 
\ =\  \frac12\cdot (n-1+d_n)\cdot (n+1)
\ =\ \frac12(n+1)\cdot d_n +\frac12(n^2-1).
\end{equation}
But, for large $d_n$, we see from (\ref{eq:bad-opt}) and (\ref{eq:bad-fair}) that 
$\frac{c(\mathcal{A}^f)}{c(\mathcal{A}^*)}$ tends to $\frac12(n+1)$.
Thus the price of fairness for machine scheduling is at least $\frac12(n+1)$.
\end{quote}
This example is troubling as a price of fairness of $\Omega(n)$ is the worst possible.
Seemingly, one may conclude that this rules out any possibility of designing a mechanism for scheduling that is both fair and effective.
The remainder of the paper is devoted to proving that this conclusion 
is emphatically incorrect!

\subsection{Near-Fair Mechanisms}

To show the existence of a fair and efficacious scheduling mechanism, our task is two-fold. First, we desire a scheduling algorithm $\mathcal{A}$ that outputs a solution whose social cost is comparable to that given by the optimal algorithm
$\mathcal{A}^*$.
Second, the fairness of the solution output by $\mathcal{A}$ must also be comparable to that given by the fairest algorithm $\mathcal{A}^f$. 

For the former desiderata it suffices to prove that the efficacy ratio $\Phi(\mathcal{A})$ is small, in particular, our target is a constant
efficacy ratio. For the latter desiderata we require a measure of fairness.
Our measure will be at the level of the individual.
Specifically, for a given instance $I\in \mathcal{I}$, we say that the fairness $f_i(\mathcal{A})$ of the mechanism to agent $i$ is the ratio between the expected completion time $c_i(\mathcal{A})$ and the completion time $c_i(\mathcal{A}^f)$ it receives in the fairest solution.
We then define the fairness ratio $f(\mathcal{A})$ for the mechanism $\mathcal{A}$ to be the 
worst case fairness over all agents and over all instances.
\begin{equation*}\label{eq:f-i}
  f(\mathcal{A}) 
  \ = \ \max_{I\in \mathcal{I}} \, \max_i \ f_i(\mathcal{A})
  \ = \ \max_{I\in \mathcal{I}} \, \max_i \ \frac{c_i(\mathcal{A})}{c_i(\mathcal{A}^f)} 
\end{equation*}
For $\epsilon\ge 0$, we say the mechanism $\mathcal{A}$ is 
$\epsilon$-{\em near-fair}, or more concisely $\epsilon$-{\em fair}, if
$f(\mathcal{A})\le  1+\epsilon$.
This implies that an $\epsilon$-fair algorithm can never output a solution in which the cost to any agent~$i$ is more than an $\epsilon$~fraction greater than the cost it pays in the fairest solution. Thus if $\epsilon$ is large the use of type-data by the algorithm $\mathcal{A}$ significantly harms at least one individual; if $\epsilon$ is small the use of type-data does not significantly harm any individual.

In this paper, our target is for $\epsilon$-{\em fair} mechanisms
where $\epsilon$ is close to zero. 
We call such an algorithm {\em almost perfectly fair}. Observe that
a $0$-fair algorithm $\mathcal{A}$ is {\em perfectly fair} -- every agent does at least as well under $\mathcal{A}$ as under the fairest mechanism $\mathcal{A}^f$.
Unfortunately, Example I extends to all perfectly fair algorithms. Ergo, all $0$-fair algorithms have extremely poor efficacy ratios, namely $\Omega(n)$.
Remarkably, we can circumvent this negative result by proving that if we allow for an arbitrarily small amount of unfairness then the efficacy ratio falls to a constant!
Specifically, we will show that an $\epsilon$-fair algorithm exists for which the the worst case efficacy ratio is $\Theta(\frac{1}{\epsilon})$.
Our result then implies the existence of an algorithm that is able to use type-data in such as way as to massively improve the social performance of the algorithm whilst harming no individual more than a negligible amount.

We remark that, even without the social welfare guarantee, $\epsilon$-fairness itself is an extremely strong guarantee.
Because it applies at the individual level it automatically applies at the group level.\footnote{This important property is an immediate consequence of the definition of our measure. We remark that this property does not hold for statistical measures of fairness: typically individual fairness does not imply group fairness, and group fairness does not imply individual fairness. See Binns~\cite{Bin20} for detailed discussions on this issue.} 
Moreover, it simultaneously applies to every group no matter how they are defined; in particular this includes groups that a particular study may not even be investigating! We emphasize that the concept of near-fair mechanisms extends beyond applications where there is a unique fairest algorithm $\mathcal{A}^f$. 
Specifically, it suffices to demand that, for every agent, the mechanism provides a comparable utility to that given in expectation by a collection of fair algorithms $\{\mathcal{A}^{f_1}, \mathcal{A}^{f_2}, \dots, \mathcal{A}^{f_k}\}$.
Thus the bicriteria measure can essentially be applied to any application setting;
only a target set of criteria (or target set of algorithms/outcomes) are required for the purpose of comparison.

\section{Pareto Optimal Schedules}\label{sec:Pareto}
In this section, we show that we can refine our search space when looking 
for a scheduling mechanism that is both fair and efficacious. 
To wit, we define the class of priority scheduling mechanisms and prove it has a subclass, namely Pareto scheduling mechanisms, that correspond exactly to algorithms
that output Pareto optimal solutions. Ergo, it will suffice in Section~\ref{sec:ratio} to study only Pareto scheduling mechanisms to obtain the best possible performance guarantees
with respect to fairness and efficacy.

\subsection{Priority Scheduling Mechanisms}

The most general scheduling mechanism assigns a probability $p_{\pi}$ to 
each permutation $\pi$ of the agents and then
schedules the agents in the order $\pi$ with probability $p_{\pi}$. 
For our purposes it will suffice to consider only the class
of {\em priority scheduling mechanisms}. A priority scheduling mechanism
partitions the jobs into priority groups. Every job of priority (group) $k$ must be scheduled before every job of priority $k+1$.
Jobs of the same (priority) group are scheduled in random order.
For example, both Smith's rule and the randomized assignment rule are 
priority scheduling mechanisms. For the former, there are $n$ priority groups
with job $i$ alone in priority group $i$. For the latter, there is just one priority group, that is, every job has the same priority.

Our task now is to show that the priority scheduling mechanisms include the optimal mechanisms in terms of the trade-off between fairness and efficacy.

\subsection{Pareto Optimal Mechanisms} 

We say that scheduling mechanism $\mathcal{A}_1$ Pareto dominates
$\mathcal{A}_2$ if 
$f(\mathcal{A}_1)\le f(\mathcal{A}_2)$ and $c(\mathcal{A}_1)\le c(\mathcal{A}_2)$ (with at least one inequality strict).
That is, $\mathcal{A}_1$ is both fairer and more efficacious than $\mathcal{A}_2$. 
To begin, we wish to characterize the mechanisms that are not Pareto dominated.
These are the {\em Pareto optimal mechanisms} -- no mechanism exists which yields better fairness and better efficacy.
Thus these mechanisms lie on the {\em Pareto frontier} of all scheduling mechanisms (we will give an illustration of this later in Figure~\ref{fig:frontier}). Our aim is to show this Pareto frontier consists {\bf only} of the Pareto optimal mechanisms. To do this, we proceed by proving a series of properties held by Pareto optimal mechanisms.

\begin{lemma}\label{lem:fairness-last}
For any priority scheduling mechanism, the worst fairness applies to a job in the lowest priority group.
\end{lemma}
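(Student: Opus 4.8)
The plan is to derive a closed form for the fairness $f_i(\mathcal{A})$ of each agent under a priority scheduling mechanism $\mathcal{A}$, and then show that the maximum of these values is attained in the last (lowest priority) group. Fix a priority mechanism with nonempty groups $G_1,\dots,G_K$ (priority $1$ scheduled first), and for a job $i\in G_k$ write $S_k=\sum_{\ell\in G_k} d_\ell$ for the total size of its group, $B_k=\sum_{\ell\in G_1\cup\cdots\cup G_{k-1}} d_\ell$ for the total size scheduled with certainty before it, and $F_k=\sum_{\ell\in G_{k+1}\cup\cdots\cup G_K} d_\ell$ for the total size scheduled after it, so that $D=B_k+S_k+F_k$.

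First I would compute the expected completion time: every earlier group precedes $i$, each job of $G_k\setminus\{i\}$ precedes it with probability $\frac{1}{2}$, and later groups never do, giving $c_i(\mathcal{A})=B_k+\frac{1}{2}S_k+\frac{1}{2}d_i$. Dividing by $c_i(\mathcal{A}^f)=\frac{1}{2}(D+d_i)$ from~(\ref{eq:fair-i}) and simplifying yields the clean form
\begin{equation*}
f_i(\mathcal{A}) \ = \ \frac{2B_k+S_k+d_i}{D+d_i} \ = \ 1+\frac{B_k-F_k}{D+d_i}.
\end{equation*}
Two consequences are immediate: an agent is treated worse than fair exactly when $B_k>F_k$, and within the last group (where $F_K=0$) the fairness $1+\frac{B_K}{D+d_i}$ is decreasing in $d_i$, so the worst-off job of the last group is its smallest job, of size $t:=\min_{\ell\in G_K} d_\ell$.

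It then remains to show no job in an earlier group $k<K$ beats this smallest last-group job. If $B_k\le F_k$ then $f_i(\mathcal{A})\le 1\le 1+\frac{B_K}{D+t}$ and we are done, so assume $B_k>F_k$. Using $d_i\ge 0$ and $t\le S_K$, it suffices to prove the single inequality $(B_k-F_k)(D+S_K)\le B_K\cdot D$; substituting $B_k-F_k=D-S_k-2F_k$, $B_K=D-S_K$, and $F_k=S_K+Y$ (where $Y\ge 0$ is the total size of the groups strictly between $G_k$ and $G_K$) reduces the difference of the two sides to $S_k D+2YD+S_k S_K+2S_K^2+2YS_K$, a sum of non-negative terms. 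Hence $f_i(\mathcal{A})\le 1+\frac{B_K}{D+t}$ for every earlier job, so the global maximum fairness is realized by the smallest job of $G_K$, which lies in the lowest priority group. (The case $K=1$ is trivial, since then every $f_i(\mathcal{A})=1$.)

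The main obstacle is the cross-group comparison in the last step: because a job in an earlier group can be much smaller than any job in the last group, its fairness ratio has a smaller denominator, so the claim does \emph{not} follow from merely observing that $B_k-F_k<B_K$. The crux is therefore to neutralize the denominator effect by reducing to the worst case $d_i=0$ and $t=S_K$ and verifying the resulting quadratic inequality; the substitution $B_k-F_k=D-S_k-2F_k$ is precisely what makes this verification collapse into a manifestly non-negative expression rather than a messy expansion.
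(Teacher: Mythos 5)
Your proof is correct and takes essentially the same approach as the paper's: both derive the closed form $f_i = 1 + \frac{B-F}{D+d_i}$ (with $B$ and $F$ the total sizes scheduled before and after job $i$'s group) and then close a cross-multiplied inequality to overcome the fact that a small $d_i$ in an earlier group shrinks the denominator. The only organizational difference is that you compare an arbitrary earlier group directly against the smallest job of the last group (via $d_i\ge 0$ and $t\le S_K$), whereas the paper reduces without loss of generality to the penultimate-versus-last comparison and closes with $d_j\le L$ and $A<D$; the substance is the same, and your version makes that reduction explicit.
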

\begin{proof}
Take any pair of jobs $i$ and $j$ where $j$ has lower priority than $i$.
We wish to show that $j$ has worse fairness than $i$. It will become apparent from the subsequent proof that we may
assume that $j$ is in the last priority group and $i$ is in the penultimate priority group.
Let $A$ be the total duration of all jobs before the penultimate group, i.e. the starting time of the penultimate group. Let $B$ denote the total duration of all jobs in the penultimate group, and $L$ the duration of all jobs in the last group. 
Hence, $A + B + L = D$. Take any job $i$ in the penultimate group.
The fairness for $i$ is:
\begin{equation}\label{eq:fairness-of-i}
f_i 
\ =\ \frac{A + \frac{1}{2}(B + d_i)}{\frac{1}{2}(D + d_i)}
\ =\ \frac{2A + B +d_i}{D + d_i}
\ =\ \frac{D + A - L + d_i}{D + d_i}
\ =\ 1+ \frac{A - L}{D + d_i}.
\end{equation}
Next take any job $j$ in the last group.
The fairness for $j$ is:
\begin{equation}\label{eq:fairness-of-j}
f_j 
\ =\ \frac{A + B + \frac{1}{2}(L + d_j)}{\frac{1}{2}(D + d_j)}
\ =\ \frac{2A + 2B + L + d_j}{D + d_j}
\ =\ \frac{D+A+B + d_j}{D+ d_j}
\ =\ 1 + \frac{A + B}{D +d_j}.
\end{equation}
From (\ref{eq:fairness-of-i}) and (\ref{eq:fairness-of-j}), to show that $f_j > f_i$
it suffices to verify that $(A + B)(D + d_i)> (A-L)(D+ d_j)$.
Rearranging, we require that $Ad_i + B(D + d_i) + L(D + d_j)>Ad_j$.
To see this, observe that $d_j \le L$, since $d_j$ is the duration of a single task in the last group, and that $A < D$. Hence,
$Ad_j 
\ \le\ AL
\ <\ DL
\ <\ Ad_i + B(D + d_i) + L(D + d_j).$
It follows that fairness strictly increases between groups and, in particular, that jobs in the last priority group have the worst fairness.
\qed
\end{proof}

Lemma~\ref{lem:fairness-last} has an important consequence. Because,
the overall fairness of the schedule is determined only by the lowest priority grouping,
we obtain a Pareto improvement if every job of higher priority is ordered
by size in its own singleton priority group. As this coincides with 
Smith's rule for those items, this can only decrease the social cost.
Further, by Lemma~\ref{lem:fairness-last}, because it has no effect on the lowest
priority grouping, the fairness of the schedule remains the same. Henceforth, we need only consider schedules where the
lowest priority group has a random ordering and, before them, all other items
are ordered by Smith's rule. For ease of exposition, we will also
partition the jobs into two {\em sections}. The second section consists
of jobs in the lowest priority group; the first section consists of
all the other jobs. Thus jobs in the first section are ordered by
Smith's rule and after them the jobs in the second section are ordered randomly.

Now we have seen that the jobs in the lowest priority group have the worst fairness. Amongst them, the smallest job has the worst fairness.
\begin{lemma}\label{lem:last-shortest}
For any priority scheduling mechanism, the worst fairness applies to the smallest job in the lowest priority group.
\end{lemma}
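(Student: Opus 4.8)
The plan is to leverage Lemma~\ref{lem:fairness-last}, which already guarantees that the worst fairness over the whole schedule is attained by some job in the lowest priority group. Hence it suffices to compare the fairness of any two jobs that both lie in this group and to show that the smaller one fares worse. To do this I would reuse essentially the computation behind equation~(\ref{eq:fairness-of-j}), but now read the resulting expression as a function of the job's own size $d_j$ rather than treating it as fixed.

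Concretely, fix any priority scheduling mechanism, and let $S$ denote the total duration of every job belonging to a strictly higher priority group than the lowest one; let $L$ be the total duration of the lowest group, so that $S + L = D$. The quantity $S$ is a single number that does not depend on which job of the lowest group we examine, since the internal ordering (or regrouping) of the higher-priority jobs is irrelevant to a last-group job's expected completion time: only the aggregate wait $S$ matters. For a job $j$ in the lowest group, its expected completion time is $S + \frac12(L + d_j)$, while its fairest completion time is $\frac12(D + d_j)$ by~(\ref{eq:fair-i}). Using $2S + L = S + D$, I would simplify the fairness ratio to
\begin{equation*}
f_j \ = \ \frac{S + \frac12(L + d_j)}{\frac12(D + d_j)} \ = \ \frac{2S + L + d_j}{D + d_j} \ = \ 1 + \frac{S}{D + d_j}.
\end{equation*}

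The decisive observation is then immediate: since $S \ge 0$ is constant across all jobs of the lowest group, the map $d_j \mapsto 1 + \frac{S}{D + d_j}$ is weakly decreasing in $d_j$, and strictly decreasing whenever $S > 0$. Therefore the job of minimum size in the lowest priority group maximizes $f_j$, and combining this with Lemma~\ref{lem:fairness-last} shows that this smallest job carries the globally worst fairness, as claimed. I do not expect a genuine obstacle; the whole content is really the clean closed form $f_j = 1 + \frac{S}{D + d_j}$. The only point deserving a sentence of care is the independence of $S$ from the choice of $j$ within the lowest group, so that the comparison between two candidate jobs is governed solely by the monotone dependence on $d_j$; once that is noted, the monotonicity finishes the proof.
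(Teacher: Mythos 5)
Your proposal is correct and is essentially the paper's own argument: your $S$ is exactly the paper's $A$ (the starting time of the last group), and the closed form $f_j = 1 + \frac{S}{D+d_j}$ together with its monotone decrease in $d_j$ is precisely the computation in the paper's proof. The only cosmetic difference is that you explicitly invoke Lemma~\ref{lem:fairness-last} to reduce to the lowest group, which the paper leaves implicit.
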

\begin{proof}
Let $A$ denote the starting time of the last group. Take any job $i$ in this lowest priority group. The fairness for $i$ is:
\begin{equation}\label{eq:worst-fairness}
f_i 
\ =\ \frac{A + \frac{1}{2} (D-A + d_i)}{\frac{1}{2}(D + d_i)}
\ =\ \frac{2A + D - A + d_i}{D + d_i}
\ =\ \frac{D + A + d_i}{D + d_i}
\ =\ 1+ \frac{A}{D + d_i}.
\end{equation}
But this is largest when $d_i$ is smallest. Consequently, the worst case fairness is achieved by the smallest job in the lowest priority group. 
\qed
\end{proof}

Lemma~\ref{lem:last-shortest} also has an important consequence. It implies that
the lowest priority group must contain the largest jobs. More specifically,
a smaller job should never have higher priority than a larger job.

\begin{lemma}\label{lem:Pareto-optimal}
In any Pareto optimal schedule, a smaller job can never have higher priority than a larger job.
\end{lemma}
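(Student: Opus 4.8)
The plan is to argue by contradiction with a swap (exchange) argument, working in the reduced two-section form established in the discussion following Lemma~\ref{lem:fairness-last}: the first section is ordered by Smith's rule and the second section (the lowest priority group) is scheduled last in random order. In this form the statement is equivalent to asserting that the lowest priority group consists of the largest jobs, so I would instead suppose that some Pareto optimal schedule has a job $\ell$ in the first section and a job $s$ in the last group with $d_\ell > d_s$; choosing $s$ to be the smallest job in the last group gives $d_s = d_{\min}$, the size that governs the worst fairness by Lemma~\ref{lem:last-shortest}. I would then modify the schedule by swapping $\ell$ and $s$: move $s$ into its Smith's-rule position in the first section and move $\ell$ into the last group. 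Since the total duration $D$ is unchanged, it remains to compare fairness and social cost before and after the swap and exhibit a Pareto improvement.

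For fairness I would invoke Lemma~\ref{lem:last-shortest}, which gives worst-case fairness $1 + A/(D + d_{\min})$, where $A$ is the starting time of the last group (the total duration of the first section) and $d_{\min}$ its smallest job size. The swap replaces a first-section job of size $d_\ell$ by one of size $d_s < d_\ell$, so $A$ strictly decreases by $d_\ell - d_s$; simultaneously, removing the smallest last-group job $s$ and inserting the large job $\ell$ cannot decrease the smallest size in the last group. Both effects lower the ratio $1 + A/(D + d_{\min})$, and the strict decrease of $A$ already makes the swapped schedule strictly fairer.

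The main obstacle is the social-cost side, since the last group is randomly ordered, so the usual deterministic exchange argument for Smith's rule does not apply verbatim. Here I would use the per-job expected completion time of a last-group job, namely $A + \tfrac12 L + \tfrac12 d_j$ where $L$ is the total duration of the last group, giving a last-group contribution of $qA + \tfrac{q+1}{2}L$ for $q$ jobs; the first-section contribution I would handle with the identity $c(S_1) = \sum_a d_a + \sum_{\{a,b\}} \min(d_a,d_b)$ valid for a Smith-ordered set. Writing $\delta = d_\ell - d_s > 0$, the swap perturbs the boundary terms by $\delta\,(1-q)/2 \le 0$, while replacing $\ell$ by the smaller $s$ in the Smith-ordered first section changes its cost by a strictly negative amount (the term $\sum_a d_a$ drops by $\delta$, and each affected pairwise minimum can only drop). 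Hence the social cost strictly decreases as well. As both criteria strictly improve, the swapped schedule Pareto dominates the original, contradicting Pareto optimality; therefore the lowest priority group must contain the largest jobs, which is exactly the claim.
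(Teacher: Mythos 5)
Your proof is correct and follows essentially the same route as the paper's: assume a violating pair, swap the two jobs, and use the worst-case fairness formula from Lemma~\ref{lem:last-shortest} to show fairness strictly improves while the expected social cost strictly decreases, contradicting Pareto optimality. The only difference is bookkeeping on the cost side --- you keep the whole first section and use the pairwise-minimum identity plus the boundary term $\delta(1-q)/2$, whereas the paper normalizes the other first-section jobs away and compares $\frac{D+d_i}{2}n$ with $\frac{D+d_j}{2}n$ --- but the underlying exchange argument is identical.
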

\begin{proof}
We have already shown jobs in the first section must be
ordered by Smith's rule. Thus, for a contradiction, let $d_i>d_j$ where
job $j$ is in the second section and job $i$ is in the first section. We may assume that $j$ is the smallest job in the second section and that $i$ is the largest job in the first section.
We will show that swapping jobs $i$ and $j$ leads to a Pareto improvement.

First consider the change in social cost caused by this swap.
Observe that the completion times of the jobs in section one (except $i$) are unaffected by the swap. Indeed these jobs just delay
the other jobs by a fixed amount (the sum of their sizes). Thus, for the purpose of comparison,
we may assume that their sizes sum to zero, indeed that there are no such jobs.
It then follows that social cost before the swap is:
\begin{equation*}
d_i + d_i \cdot (n-1) + \frac{D - d_i}{2}\cdot (n -1 +1) 
\ =\ d_i \cdot n + \frac{D - d_i}{2} \cdot n 
\ =\ \frac{D + d_i}{2} \cdot n.
\end{equation*}
The social cost after the swap is:
$d_j + d_j \cdot (n-1) + \frac{D - d_j}{2} \cdot (n -1 +1) 
\ =\ \frac{D + d_j}{2} \cdot n.$
By assumption, $d_i > d_j$, and so the social cost decreases after swapping the jobs.


Next consider the change in fairness after the swap.
Before the swap, by Lemma~\ref{lem:last-shortest} and (\ref{eq:worst-fairness}),
the fairness is:
$1+ \frac{A}{D + d_j}$.
After the swap, the fairness of any job $k$ of lowest priority is:
$1+ \frac{A-d_i+d_j}{D + d_k} \ <\  1+ \frac{A}{D + d_k} \ \le \ 1+ \frac{A}{D + d_j}.$
Here the strict inequality follow as $d_i>d_j$. The inequality holds as every
other job $k$ in section two after the swap (possibly $k=i$) is at least as large as $j$.
Thus the fairness also improves after the swap and we have a Pareto improvement,
a contradiction.
\qed
\end{proof}


Our arguments imply that there are only $n$ Pareto optimal scheduling mechanisms.

\begin{corollary}\label{cor:Pareto-schedules}
There are exactly $n$ priority scheduling mechanisms that are Pareto optimal.
\end{corollary}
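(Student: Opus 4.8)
My plan is to read the statement as a claim about the worst-case (mechanism-level) Pareto frontier, since the domination relation is defined through the quantity $f(\mathcal{A})$ and the efficacy ratio $\Phi(\mathcal{A})$, both of which are worst-case over all instances. The backbone is already supplied by Lemmas~\ref{lem:fairness-last}--\ref{lem:Pareto-optimal}. First I would invoke the reduction following Lemma~\ref{lem:fairness-last} to restrict attention to mechanisms whose lowest-priority group is scheduled at random while every earlier job sits in its own singleton group ordered by Smith's rule; this weak domination holds on every instance, hence also for the worst-case measures. Then Lemma~\ref{lem:last-shortest} and Lemma~\ref{lem:Pareto-optimal} force the lowest-priority group of any Pareto optimal mechanism to be a size-sorted suffix of the jobs. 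Consequently such a mechanism is determined by a single integer $s\in\{1,\dots,n\}$, the size of its lowest-priority group: write $\mathcal{A}_s$ for the mechanism that schedules the $s$ largest jobs at random after Smith-ordering the $n-s$ smallest. This yields the bound of at most $n$ Pareto optimal mechanisms, the extremes being $\mathcal{A}_1=\mathcal{A}^*$ (Smith's rule) and $\mathcal{A}_n=\mathcal{A}^f$ (random assignment).

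To upgrade ``at most $n$'' to ``exactly $n$'' I would show the family $\mathcal{A}_1,\dots,\mathcal{A}_n$ is an antichain. It suffices to prove that the worst-case fairness $f(\mathcal{A}_s)$ is strictly decreasing in $s$ while the worst-case efficacy ratio $\Phi(\mathcal{A}_s)$ is strictly increasing in $s$; then for $s'>s$ the mechanism $\mathcal{A}_{s'}$ is strictly fairer but strictly costlier (and for $s'<s$ it is strictly less fair), so neither dominates and every $\mathcal{A}_s$ is on the frontier. Starting from the characterisation $f=1+A_s/(D+d_{\min})$ in~(\ref{eq:worst-fairness}), where $A_s$ is the start time of the group, a short optimisation shows the fairness is maximised on the all-equal instance, giving $f(\mathcal{A}_s)=1+\frac{n-s}{n+1}$, strictly decreasing. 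For efficacy, since first-section jobs are Smith-scheduled identically in $\mathcal{A}_s$ and $\mathcal{A}^*$, the worst case shrinks to the price of fairness of a random group of $s$ jobs; the Example~I instance with one job of size $d_n\to\infty$ makes each of the $s-1$ small jobs sharing the random group incur an expected $\tfrac12 d_n$ delay, yielding $\Phi(\mathcal{A}_s)=\tfrac{s+1}{2}$, strictly increasing and recovering Example~I at $s=n$. These two monotonicities give the antichain and hence the count of exactly $n$.

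The main obstacle is conceptual rather than computational: the trade-off must be argued for the worst-case quantities $f(\cdot)$ and $\Phi(\cdot)$ that define domination, not instance by instance. On a fixed instance the per-job fairness need not be monotone in $s$ (enlarging the random group can lower the smallest job's size faster than it lowers the group's start time, so some $\mathcal{A}_{s+1}$ is dominated by $\mathcal{A}_s$ on that input), so one cannot simply compare the mechanisms on a single instance. The work therefore lies in (i) correctly identifying, for each measure, the instance realising its worst case and (ii) verifying strictness there, so that no two members collapse to the same $(f,\Phi)$ point. A secondary point to check is that the reduction to cut mechanisms holds at the worst-case level: one must argue that an arbitrary priority mechanism is weakly dominated, simultaneously on every instance, by the appropriate $\mathcal{A}_s$, so that passing to the supremum over instances preserves the domination.
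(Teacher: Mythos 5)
Your proposal is correct, and its first half is exactly the paper's proof: Lemmas~\ref{lem:fairness-last}--\ref{lem:Pareto-optimal} reduce any Pareto optimal mechanism to one whose lowest-priority group is a size-sorted suffix scheduled at random, with everything before it in Smith-ordered singletons, giving the $n$ candidates $\mathcal{A}^0,\dots,\mathcal{A}^{n-1}$. Where you genuinely diverge is that the paper stops there and simply asserts ``it follows that there are $n$ Pareto optimal schedules,'' i.e.\ it proves only the upper bound of $n$; you additionally prove the lower bound by showing the family is an antichain, via the worst-case values $f(\mathcal{A}_s)=1+\frac{n-s}{n+1}$ (attained on the all-equal instance, which one checks is extremal since $d_{n-s+1}\ge d_{n-s}\ge A/(n-s)$ forces $\frac{A}{D+d_{n-s+1}}\le\frac{n-s}{n+1}$) and $\Phi(\mathcal{A}_s)=\frac{s+1}{2}$ (from the Example~I-type instance), which are strictly monotone in opposite directions. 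Both computations check out, and this step is not merely cosmetic: your cautionary remark that per-instance fairness is not monotone in $s$ is real. For example, on the instance $(1,1,1,M)$ with $M$ large, $\mathcal{A}^3$ has fairness $1+\frac{3}{3+2M}$ while $\mathcal{A}^2$ has fairness $1+\frac{2}{4+M}$, which is strictly worse for $M>3$; since $\mathcal{A}^2$ also has strictly higher cost, it is Pareto dominated \emph{on that instance}. So the count of ``exactly $n$'' only holds under the worst-case (mechanism-level) reading of $f$ and $\Phi$ that you adopt, and your antichain argument is the piece that makes the corollary's ``exactly'' rigorous; the paper's own proof, read instance by instance, would not deliver it. The one loose end in your sketch is the matching upper bound $\Phi(\mathcal{A}_s)\le\frac{s+1}{2}$ (needed for strict monotonicity of $\Phi$, not just a lower bound); it follows by noting the first section and the delay it imposes are common to $\mathcal{A}_s$ and $\mathcal{A}^*$, and within the random group the randomized cost $\frac12 G(s+1)$ is at most $\frac{s+1}{2}$ times the group's optimal cost, which is at least $G$.
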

\begin{proof}
By Lemma~\ref{lem:Pareto-optimal}, any Pareto optimal schedule must be consistent
with the order of the job sizes. Further, by Lemma~\ref{lem:fairness-last},
every priority group is a singleton except for the lowest priority group.
Thus to define a Pareto optimal schedule we may simply select any 
number $0\le k< n$. Then let each job $i\le k$ form a singleton group with priority $i$,
and let jobs $\{k+1, k+2,\dots n\}$ form a group with (lowest) priority~$k+1$.
We call this the $k$-th Pareto optimal schedule which we denote by $\mathcal{A}^k$.
It follows that there are $n$ Pareto optimal schedules: $\mathcal{A}^0, \mathcal{A}^1, \dots,\mathcal{A}^{n-1}$.
\qed
\end{proof}


\subsection{The Pareto Frontier}\label{app:a}
Let's see an example illustrating how the $n$ Pareto scheduling mechanisms lie on the Pareto frontier.
Figure~\ref{fig:frontier} shows the trade-off between fairness and social cost for an example with $2^{9-\ell}$ jobs of size $2^{\ell}$, for all $0\le \ell \le 9$. That is, there are ten job sizes $1, 2, 4,\dots, 512$ with $512, \dots, 4, 2, 1$ jobs of each size, respectively.
In particular, the total duration of all the jobs of each size is constant (512).
The $y$-axis gives the fairness guarantee, $\epsilon$, and the $x$-axis gives the efficacy ratio for the instance. Each point represents the performance of a given priority scheduling mechanism. 
But as there are an exponential number of priority scheduling mechanisms and we have $n=1023$ agents,
we have not plotted them all in Figure~\ref{fig:frontier}. Specifically, shown are only those
priority scheduling mechanisms that did not place two jobs of identical size in different
sections (that is, in both the first section and the second section). Of the 1023 Pareto optimal
mechanisms, ten satisfy this property and they are shown in red; observe that these do lie on the Pareto frontier.

\begin{figure}[h!]
\centering
\includegraphics[scale=0.8]{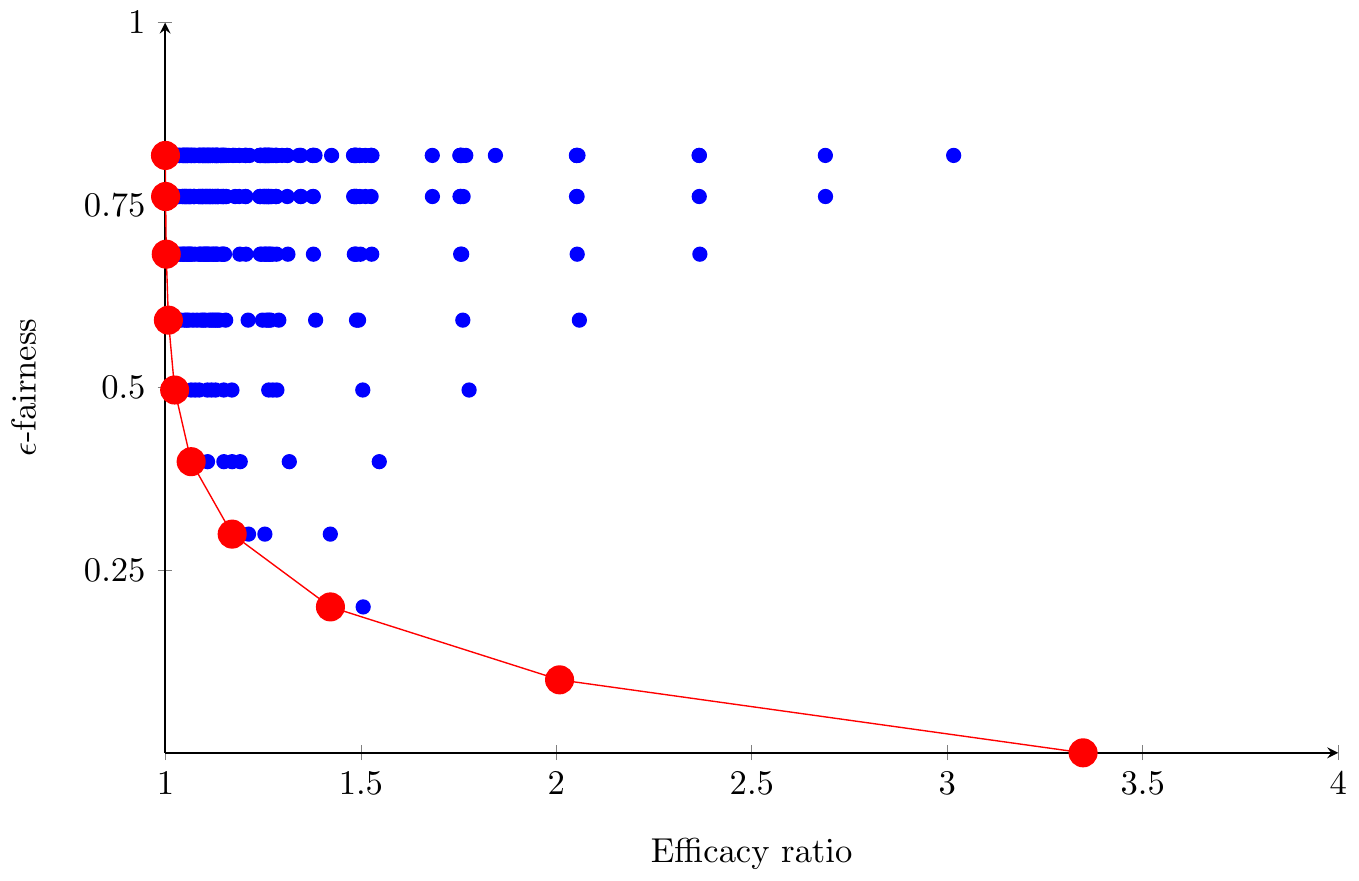}
\caption{The Pareto frontier showing the fairness vs social cost trade-off.}\label{fig:frontier}
\end{figure}

We remark that Lemma~\ref{lem:fairness-last} explains why many of the
mechanisms share the same fairness guarantee and, hence, are plotted on
the same horizontal line. In addition, in Figure~\ref{fig:frontier}, Smith's rule corresponds 
to the leftmost red Pareto schedule $\mathcal{A}^{n-1}$ and the random assignment rule corresponds 
to the rightmost red Pareto schedule $\mathcal{A}^{0}$.

\section{Fair and Efficacious Scheduling}\label{sec:ratio}

We must now decide on the appropriate trade-off between fairness and social cost. 
By the results in Section~\ref{sec:Pareto}, it suffices to analyze only the
$n$ Pareto optimal mechanisms $\{\mathcal{A}^0, \mathcal{A}^1, \dots,\mathcal{A}^{n-1}\}$.
If we run $\mathcal{A}^{n-1}$ then this is Smith's rule 
and thus, trivially, we can obtain the optimal solution. But this gives a fairness
guarantee where $\epsilon$ can be as large as $1$, which is extremely unfair. 
Consequently, our interest lies in smaller values of $\epsilon$. If we 
run $\mathcal{A}^{0}$ then this is the random assignment rule.
So we have $\epsilon=0$ giving a perfectly fair assignment.
But now we have the opposite problem, the social cost may be extremely high.
Specifically, from Example I, the efficacy ratio of this mechanism is $\Omega(n)$.
Can we get the best of both worlds? In particular, for a fixed $\epsilon>0$
what is the best social cost we can guarantee?

\subsection{The Fairness of Pareto Scheduling}

Let's commence by computing the fairness of each Pareto optimal scheduling mechanism, $\mathcal{A}^{k}$.
Take an instance $I$ with sizes $d_1\le d_2\le \cdots \le d_n$ and define 
\begin{equation*}
\epsilon_k \ =\ \frac{\sum_{\ell=1}^k d_\ell}{\sum_{\ell=1}^n d_\ell} \ =\ \frac{\sum_{\ell=1}^k d_\ell}{D}.
\end{equation*}

\begin{theorem} The $k$-th Pareto optimal scheduling mechanism $\mathcal{A}^k$ is $\epsilon$-fair, where $\epsilon=\epsilon_k$.
\label{thm:epfair}
\end{theorem}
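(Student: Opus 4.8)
The plan is to pin down the worst-case fairness of $\mathcal{A}^k$ using the structural results already established and then compare it to $\epsilon_k$. First I would recall from Corollary~\ref{cor:Pareto-schedules} exactly what $\mathcal{A}^k$ looks like on the instance $I$: the $k$ smallest jobs $d_1,\dots,d_k$ each sit alone in a singleton priority group (the first section, ordered by Smith's rule), while the remaining jobs $d_{k+1},\dots,d_n$ form the single lowest priority group (the second section), scheduled in random order.

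Next I would locate the agent realizing the worst fairness. By Lemma~\ref{lem:fairness-last} the worst fairness occurs in the lowest priority group, and by Lemma~\ref{lem:last-shortest} among those jobs it is attained by the smallest one in that group, which here is job $k+1$. This reduces the entire claim to evaluating the single quantity $f_{k+1}(\mathcal{A}^k)$.

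Then I would invoke the closed form~(\ref{eq:worst-fairness}). The starting time of the lowest priority group is precisely the total size of the first section, namely $A=\sum_{\ell=1}^{k} d_\ell$, so substituting $d_i=d_{k+1}$ gives $f_{k+1}(\mathcal{A}^k)=1+\frac{A}{D+d_{k+1}}$. Since every job size is nonnegative we have $D+d_{k+1}\ge D$, hence $\frac{A}{D+d_{k+1}}\le \frac{A}{D}=\frac{\sum_{\ell=1}^k d_\ell}{D}=\epsilon_k$. Combining this with the two lemmas, the per-instance fairness of $\mathcal{A}^k$ satisfies $f(\mathcal{A}^k)=1+\frac{A}{D+d_{k+1}}\le 1+\epsilon_k$, which is exactly the statement that $\mathcal{A}^k$ is $\epsilon_k$-fair.

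I expect no real obstacle; the content is carried entirely by the earlier lemmas, which already do the heavy lifting of identifying \emph{where} the worst fairness lives. The only points needing care are bookkeeping: confirming that the starting time $A$ of the last group equals $\sum_{\ell=1}^k d_\ell$ (so that $A/D$ matches the defined $\epsilon_k$), and observing that the gap between $1+\frac{A}{D+d_{k+1}}$ and $1+\epsilon_k$ means $\epsilon_k$ is genuinely an upper bound, tight precisely when $d_{k+1}=0$. It is worth flagging this last observation explicitly, since the tightness analysis in Section~\ref{sec:ratio} will depend on how closely the true fairness tracks $\epsilon_k$.
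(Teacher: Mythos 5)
Your proposal is correct and follows essentially the same route as the paper: both reduce the claim to the smallest job in the lowest priority group via Lemma~\ref{lem:last-shortest}, and your bound $1+\frac{A}{D+d_{k+1}}\le 1+\frac{A}{D}=1+\epsilon_k$ is algebraically identical to the paper's chain $\frac{(1+\epsilon)D+d_i}{D+d_i}<\frac{(1+\epsilon)D}{D}=1+\epsilon$. Your added remark about when the bound is tight is a harmless refinement, not a divergence.
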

\begin{proof}
Let job $i=k+1$ be the smallest job in the second section and set $\epsilon=\epsilon_k$. Then by Lemma~\ref{lem:last-shortest} the fairness of the
$k$-th Pareto optimal schedule to agent $i$ is
\begin{equation*}
f_i(\mathcal{A}^k) \ = \ \frac{\epsilon D + \frac{1}{2} \left[ (1- \epsilon) D + d_i \right]}{\frac{1}{2}(D + d_i)} \\
\ =\  \frac{2\epsilon D + (1 - \epsilon) D + d_i}{D + d_i} \\
\ =\  \frac{(1 + \epsilon) D + d_i}{D + d_i} \\
\ <\  \frac{(1 + \epsilon) D}{D} \\
\ =\  1 + \epsilon.
\end{equation*}
Thus the schedule is indeed $\epsilon$-fair.
\qed
\end{proof}


%

\subsection{The Efficacy of Pareto Scheduling}

So the $k$-th Pareto optimal schedule $\mathcal{A}^k$ is $\epsilon$-fair, for $\epsilon =\epsilon_k=\frac{\sum_{\ell=1}^k d_\ell}{\sum_{\ell=1}^n d_\ell}$. But what is its efficacy ratio? 
Take a worst-case instance $I$ with sizes $d_1\le d_2\le \cdots \le d_n$ where $\sum_{\ell=1}^k d_\ell= \epsilon\cdot D$.
To find its efficacy, we will transform $I$ into an auxiliary instance $\hat{I}$ whose efficacy is at least as bad as $I$, but which is easier to analyze.
To begin, by scaling, we may assume $d_k=1$.
We say that a job $i$ is {\em small} if $d_i<1$, is {\em medium} if $d_i=1$, and is
{\em large} if $d_i>1$.
Then, given $I$, define the following parameters: 
\begin{itemize}
    \item Let $S$ be the total size of the small jobs: i.e. $S=\sum_{j:d_j<1} d_j$.
    \item Let $M$ be the total size of the medium jobs, i.e. $M=\sum_{j: d_j=1} d_j = |\{j: d_j=d_k=1\}|$.
    \item Let $L$ be the total size of the large jobs, i.e. $L=\sum_{j:d_j>1} d_j$.
    \item Let $\lambda \in [0,1]$ be the fraction of the medium jobs in the first section.
\end{itemize}
These definitions are illustrated in Figure~\ref{fig:efficacy}. Observe that,
for the $k$-th Pareto optimal schedule,
the jobs corresponding to $S$ are all in the first section and the jobs corresponding to $L$ are all in the second section. 
Of course, Smith's rule will simply order the jobs by size.
\begin{figure}[h]
    \centering
    \includegraphics[scale=1]{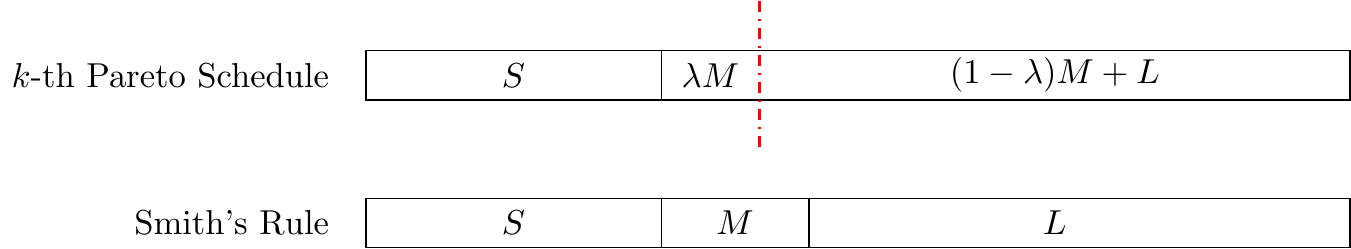}
\caption{Two different algorithms on instance $I$.}
\label{fig:efficacy}
\end{figure}
We are now ready to transform $I$ into an auxiliary instance $\hat{I}$. Furthermore, $\hat{I}$ will satisfy two properties that will simplify
our analyses.

{\sc Property 1.} Every small job in $\hat{I}$ has size $0$.

\begin{proof}
Take any small job $i$ in $I$. Then in $\hat{I}$ set $d_i=0$.
We need to prove this change cannot decrease the efficacy ratio.
To do this we require the following definitions.
For the original instance $I$, let $n_S, n_M=M$ and $n_L$ be the number of small, medium and large jobs, respectively. In addition, let $c^*(S)$
be the social cost if we apply Smith's rule only on the small items.
Similarly, let $c^*(M)$ and $c^*(L)$ be the social cost incurred under Smith's rule if we only have the medium items or only have the large items, respectively.

So, for the original instance, the social cost of the optimal mechanism $\mathcal{A}^*$ is, by Smith's rule,
$$c(\mathcal{A}^*) = c^*(S)+n_MS+c^*(M)+n_R(S+M)+c^*(L)$$
Furthermore, for the original instance, the social cost of the $k$-th Pareto optimal scheduling mechanism is:
$$
c(\mathcal{A}^k) 
= c^*(S) + \lambda n_MS + \frac{1}{2}\lambda M(\lambda n_M+1)+ ((1-\lambda)n_M + n_L)(S+\lambda M)+\frac{1}{2}((1-\lambda)M+L)(n_L+(1-\lambda)n_M+1)
$$
Here the last term follows by (\ref{eq:fair}). 
Hence the welfare ratio is:
{\small
\begin{align}\label{eq:I-hat}
    \frac{c(\mathcal{A}^k)}{c(\mathcal{A}^*)} &= \frac{c^*(S) +  \lambda n_MS + \frac{1}{2}\lambda M( \lambda n_M+1) + ((1-\lambda )n_M + n_L)(S+\lambda M)+\frac{1}{2}((1-\lambda)M+L)(n_L+(1-\lambda)n_M+1)}{c^*(S)+n_MS+c^*(M)+n_L(S+M)+c^*(L)}\nonumber\\
&=\frac{\left(\frac{1}{2}\lambda M(\lambda n_M+1) + ((1-\lambda )n_M + n_L)\lambda M+\frac{1}{2}((1-\lambda)M+L)(n_L+(1-\lambda)n_M+1)\right)+ \left(c^*(S) + n_M S + n_L S\right)}{\left(c^*(M)+n_LM+c^*(L)\right) + \left(c^*(S) + n_M S + n_L S\right)}\nonumber\\
    &< \frac{\frac{1}{2}\lambda M(\lambda n_M+1) + ((1-\lambda )n_M + n_L)\lambda M+\frac{1}{2}((1-\lambda)M+L)(n_L+(1-\lambda)n_M+1)}{c^*(M)+n_LM+c^*(L)}
\end{align}
}%
The strict inequality follows from the fact that $\frac{\alpha+\gamma}{\beta+\gamma}< \frac{\alpha}{\beta}$, for any non-negative $\alpha, \beta, \gamma$ with $\alpha>\beta$.

But recall $S=0$ in $\hat{I}$. Thus the denominator of~(\ref{eq:I-hat}) is just the cost of the optimal schedule for $\hat{I}$. Similarly, the numerator is the social cost of the $k$-th Pareto schedule for $\hat{I}$.
Thus, as claimed, $\hat{I}$ has a worse welfare ratio than $I$ and 
every small job it contains has size $0$.
\end{proof}

{\sc Property 2.} There is exactly one large job in $\hat{I}$.

\begin{proof}
Recall we may assume every small job in $\hat{I}$ has size $0$ by Property 1.
But then these small jobs have no effect on the social cost (if they are scheduled first). Thus the welfare ratio~(\ref{eq:I-hat}) is unaffected by assuming there are no small jobs, that is $n_A=0$.

Then to maximize the efficacy ratio, we may minimize the denominator whilst keeping the numerator fixed. To do this, further construct $\hat{I}$ as follows.
Among the $n_L$ large jobs, take the $n_L-1$ smallest of them and change their sizes to $1$. That is, they become medium jobs.
Let the size of the remaining large job increase to $L-(n_L-1)$. 
Observe that these transformations imply the sizes of these jobs still sum to $L$. In particular, this implies that the numerator of~(\ref{eq:I-hat}) does remain the same, as desired.
The original optimal social cost is
\begin{equation}\label{eq:before}
c^*(M)+n_L M+c^*(L) 
\ = \ c^*(M)+n_L M + \left(\sum_{i=1}^{n_L-1}(n_L -i +1)\cdot d_i\right) + L
\end{equation}
Here the final two terms follow by (\ref{eq:opt}).
After the transformation, the optimal social cost is:
\begin{equation}\label{eq:after}
c^*(M)+n_L M + \sum_{i=1}^{n_L-1} (n_L -i +1)\cdot 1 + L
\end{equation}
Since $d_i > 1$ for each job, (\ref{eq:after}) is smaller than (\ref{eq:before}).
Thus, the optimal social cost has fallen, completing the proof.
\end{proof}

\subsection{The Efficacy Ratio}

So to calculate the worst case efficacy ratio it suffices to consider
an instance $\hat{I}$ where every job is of medium size, that is~$1$, except for one large job of size $d_n=L$, say.
Now let $n_0=M$ be the number of medium jobs. Thus $n=n_0+1$ and $D = n_0 + L$.
In addition, let $\hat \epsilon = \frac{n_0}{D}$ be the proportion of the total sum of the job sizes due to the medium jobs.
Finally, recall that $k$ of the medium jobs are in the first section for the $k$-th Pareto optimal schedule $\mathcal{A}^k$, and $n_0-k$ of the medium jobs are in the second section. Let $\epsilon=\epsilon_k=\frac{k}{D}$ signify the dividing point between the first section
and the second section; observe $0 \leq \epsilon \leq \hat \epsilon$.

\begin{theorem}\label{thm:upper-single}
The efficacy ratio of $\mathcal{A}^k$ is at most $\frac{1}{4\epsilon}+ \left(1+ \frac{\epsilon}{4}\right)$, where $\epsilon=\epsilon_k$.
\end{theorem}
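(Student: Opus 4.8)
The plan is to work entirely with the reduced auxiliary instance $\hat I$ supplied by Properties~1 and~2: there are $n_0$ medium jobs of size $1$ together with a single large job of size $L$, so that $D=n_0+L$ and $n=n_0+1$, and the first section of $\mathcal A^k$ consists of $k$ of the medium jobs. Writing $N=c(\mathcal A^k)$ and $M=c(\mathcal A^*)$, the first step is to obtain closed forms for both. By Smith's rule the optimal cost is $M=\frac{n_0(n_0+3)}{2}+L$. For $\mathcal A^k$ the first section contributes completion times $1,\dots,k$, while the second section is a random ordering, starting at time $k$ and of total size $D-k$, of the remaining $n_0-k$ medium jobs and the large job; applying the completion-time identity behind~(\ref{eq:fair}) to this section yields $N=\frac{Dn_0}{2}+D+\frac{k}{2}(1-L)$. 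As sanity checks, $k=0$ recovers $\tfrac12 D(n_0+2)$, the random-assignment cost of~(\ref{eq:fair}), and $k=n_0$ recovers $M$, since then only the large job sits alone in the second section.

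The key algebraic step is to subtract these two expressions. After substituting $D=n_0+L$ the quadratic-in-$n_0$ terms cancel and essentially everything collapses to
\[
  N-M \;=\; \tfrac12\,(L-1)\,(n_0-k).
\]
This is the heart of the argument: the excess cost is simply the size gap $L-1$ times the number $n_0-k$ of medium jobs pushed into the random section, halved. Introducing $\hat\epsilon=n_0/D$ and using $k=\epsilon D$, so that $L-1=(1-\hat\epsilon)D-1$ and $n_0-k=(\hat\epsilon-\epsilon)D$, the cross terms cancel cleanly to give $N-M=\tfrac12 D(\hat\epsilon-\epsilon)\big((1-\hat\epsilon)D-1\big)$, while $M=\frac{\hat\epsilon^2D^2}{2}+\frac{3\hat\epsilon D}{2}+(1-\hat\epsilon)D$.

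I would then bound the ratio $R=N/M=1+\frac{N-M}{M}$. Since $\hat\epsilon-\epsilon\ge 0$, dropping the $-1$ in the numerator only enlarges it, and dropping the positive lower-order terms $\frac{3\hat\epsilon D}{2}+(1-\hat\epsilon)D$ in the denominator only shrinks it, so for every finite instance $\frac{N-M}{M}<\frac{(\hat\epsilon-\epsilon)(1-\hat\epsilon)}{\hat\epsilon^2}$. Thus the supremum over instance sizes is controlled purely by $\hat\epsilon\in[\epsilon,1]$, and it remains to maximize this bound. Setting $u=1/\hat\epsilon\in[1,1/\epsilon]$ turns the right-hand side into the concave quadratic $(1+\epsilon)u-\epsilon u^2-1$, whose maximizer $u^\star=\frac{1+\epsilon}{2\epsilon}$ lies in range precisely because $\epsilon\le 1$; substituting back gives maximum value $\frac{(1+\epsilon)^2}{4\epsilon}-1$. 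Hence $R<\frac{(1+\epsilon)^2}{4\epsilon}=\frac{1}{4\epsilon}+\frac12+\frac{\epsilon}{4}\le \frac{1}{4\epsilon}+1+\frac{\epsilon}{4}$, which is the claimed bound (with a little slack in the additive constant).

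The main obstacle I anticipate is purely the bookkeeping in the first two steps: assembling $N$ correctly from the two sections and verifying the near-total cancellation that produces $N-M=\tfrac12(L-1)(n_0-k)$. Once that clean form is in hand, the rest is a routine one-variable concave maximization; the only feasibility point to watch is that the extremizer $\hat\epsilon^\star=\frac{2\epsilon}{1+\epsilon}$ indeed satisfies $\epsilon\le\hat\epsilon^\star\le 1$, which again reduces to $\epsilon\le 1$.
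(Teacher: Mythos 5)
Your proof is correct, and it reaches the stated bound by a genuinely different and, in my view, cleaner route than the paper's. Both arguments work on the same reduced instance $\hat I$, and your closed forms for the two costs are algebraically identical to (\ref{eq:ratio-fair}) and (\ref{eq:ratio-opt}) (substitute $n_0=\hat\epsilon D$, $k=\epsilon D$, $L=(1-\hat\epsilon)D$ to see this). The divergence is in the optimization step. The paper bounds the ratio $c(\mathcal A^k)/c(\mathcal A^*)$ directly as a function of $r=\hat\epsilon/\epsilon$ and $D$, locates the maximizer $r_{\max}$ by calculus (producing a square-root expression), and then controls the result via the auxiliary quantity $\gamma$ together with a monotonicity claim in $\epsilon$ and $D$. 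You instead isolate the excess cost, exploit the factorization $c(\mathcal A^k)-c(\mathcal A^*)=\tfrac12(L-1)(n_0-k)$ to bound the relative excess by $(\hat\epsilon-\epsilon)(1-\hat\epsilon)/\hat\epsilon^2$ --- a quantity free of $D$ --- and finish with a one-variable concave quadratic in $u=1/\hat\epsilon$ whose maximizer $u^\star=\frac{1+\epsilon}{2\epsilon}$ is feasible exactly when $\epsilon\le 1$. This avoids the square roots and the monotonicity argument entirely, and it buys a sharper constant: $\frac{(1+\epsilon)^2}{4\epsilon}=\frac{1}{4\epsilon}+\frac12+\frac{\epsilon}{4}$, within $\epsilon/4$ of the lower bound of Theorem~\ref{thm:lower-single}, whereas the stated bound carries an extra additive $\tfrac12$. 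What the paper's direct parametrization buys in exchange is that it keeps the exact dependence on $D$ and $r$ visible, which is what the lower-bound construction in Section~\ref{lower_bound_ex} exploits by forcing the inequalities tight at $r=2$ (consistent with your maximizer $\hat\epsilon^\star=\frac{2\epsilon}{1+\epsilon}\approx 2\epsilon$ for small $\epsilon$). Two small housekeeping points: your symbols $M$ and $N$ collide with the paper's use of $M$ for the total medium-job size and $N$ for the number of jobs, and you should state explicitly that $L>1$ and $\hat\epsilon\ge\epsilon$ (both guaranteed by the construction of $\hat I$) when you drop the $-1$ from the numerator and assert the excess is nonnegative.
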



\begin{proof}
\noindent The social cost of the $k$-th Pareto schedule is: 
\begin{align}
     c(\mathcal{A}^k) &= \frac{1}{2}\epsilon D(\epsilon D + 1) + (n-\epsilon D)\epsilon D + \frac{1}{2}(1-\epsilon)D(n-\epsilon D +1) \nonumber\\
    &= \frac{1}{2}\epsilon D\big(\epsilon D + 1 + 2(n-\epsilon D) - (n-\epsilon D +1)\big) + \frac{1}{2}D(n-\epsilon D + 1) \nonumber\\
    &= \frac{1}{2}\epsilon Dn + \frac{1}{2}D(n-\epsilon D + 1)  \nonumber\\
     &= \frac{1}{2}D\big( (\epsilon+1)n-\epsilon D + 1) \big) \nonumber
\end{align}
But $n=\hat \epsilon D +1$, so rearranging we have:
\begin{align}\label{eq:ratio-fair}
     c(\mathcal{A}^k) 
    &=\frac{1}{2}D\left(\epsilon+\hat \epsilon D - \epsilon(1-\hat \epsilon)D + 2 \right)
\end{align}
On the other hand, the social cost of the optimal solution is:
\begin{equation}\label{eq:ratio-opt}
  c(\mathcal{A}^*) \ =\  \sum_{j=1}^{N-1} j + D
  \ =\  \frac12(N-1)N + D
  \ =\  \frac{1}{2}D\cdot \left(\hat \epsilon^2 D + \hat \epsilon + 2\right)
\end{equation}
Then, from (\ref{eq:ratio-fair}) and (\ref{eq:ratio-opt}), the efficacy ratio is:
\begin{equation*}
    \frac{c(\mathcal{A}^k)}{c(\mathcal{A}^*)} 
    \ =\ \frac{\epsilon + \hat \epsilon D - \epsilon (1-\hat{\epsilon}) D + 2}{\hat \epsilon^2D + \hat \epsilon + 2}
    \ \leq\ \max \left( \frac{\epsilon + \hat \epsilon D - \epsilon (1-\hat{\epsilon})D}{\hat \epsilon^2D+\hat{\epsilon}}, \frac{2}{2} \right)
    \ = \ \frac{\epsilon + \hat \epsilon D - \epsilon (1-\hat{\epsilon})D}{\hat \epsilon^2D+\hat{\epsilon}}
\end{equation*}
Now recall that $\epsilon \le \hat \epsilon$. So assume $r\cdot \epsilon = \hat{\epsilon}$ for some $r\ge 1$. Thus

\begin{equation}
    \frac{c(\mathcal{A}^k)}{c(\mathcal{A}^*)} 
    \ \leq\ \frac{\epsilon + r\epsilon D - \epsilon (1-r{\epsilon}) D }{r^2\epsilon^2D + r \epsilon}
    \ =\ \frac{1 + r D - ({1}-r{\epsilon}) D}{r^2 \epsilon D + r}
\label{eq:ratio-fair-approx}
\end{equation}
To optimize this efficacy ratio, we must find the factor $r=r_{\max}$ that maximizes this upper bound. We first take the derivative of the ratio with respect to $r$:
\begin{align*}
    \frac{\partial }{\partial r} \left(\frac{1 + r D - ({1}-r{\epsilon}) D}{r^2 \epsilon D + r}\right) \ =\  \frac{D+\epsilon D}{r + \epsilon r^2D} - \frac{(1+2\epsilon r D)(1+rD-(1-\epsilon r)D)}{(r+\epsilon r^2 D)^2}
\end{align*}
Setting the derivative to $0$ and solving for $r$ with $D > 1$ and $0<\epsilon < \frac{D-1}{D}$ gives:
\begin{equation*}
r_{\max} 
    \ = \ \frac{D-1}{(1+\epsilon)D} + \sqrt{\frac{D - \epsilon D + \epsilon D^2 - 1}{\epsilon(1+\epsilon)^2D^2}}
     \ = \ \frac{1}{(1+\epsilon)D}\left( D-1 + \sqrt{\frac{D - \epsilon D + \epsilon D^2 - 1}{\epsilon}}\right)
\end{equation*}
Plugging $r_{\max}$ into the efficacy ratio (\ref{eq:ratio-fair-approx}), we find that the efficacy ratio is at most
\begin{equation*}
    \mathit{Eff}_{\max} := \frac{1 + r_{\max} D - ({1}-r_{\max}{\epsilon}) D}{r_{\max}^2 \epsilon D + r_{\max}}
\end{equation*}
We claim that $\mathit{Eff}_{\max} < \frac{(1+\e)^2}{4\e} + \frac{1}{2}=\frac{1}{4\epsilon}+ \left(1+ \frac{\epsilon}{4}\right)$. 
However, $\mathit{Eff}_{\max}$ is non-monotonic and awkward to analyze directly. So we will study it indirectly via $\mathit{Eff}_{\max}-\gamma$ where 
\begin{equation*}
    \gamma:= \frac{1}{r_{\max}^2 \epsilon D + r_{\max}}+\frac{(1+\e)}{4\e}
\end{equation*}
We will now show that
\begin{equation}
    \mathit{Eff}_{\max}-\frac{(1+\e)^2}{4\e}\ <\ \mathit{Eff}_{\max}-\gamma \ <\  \frac{1}{2}
    \label{eq:ratio-upper-tight}
\end{equation}
First, let's prove the lower bound in (\ref{eq:ratio-upper-tight}). It suffices to show $f<\frac{(1+\e)^2}{4\e}$, which is equivalent to showing:
\begin{equation}
    \frac{1}{r_{\max}^2 \epsilon D + r_{\max}} 
    \ <\  \frac{(1+\e)^2}{4\e} - \frac{(1+\e)}{4\e}
    \ < \ \frac{1+\e}{4}
     \label{eq:f-bound1}
\end{equation}
So to show (\ref{eq:f-bound1}) holds it suffices to show that $4 < ({1+\e})(r_{\max}^2 \epsilon D + r_{\max})$.
But 
\begin{equation*}
    ({1+\e})(r_{\max}^2 \epsilon D + r_{\max}) \ >\ (r_{\max}^2 \epsilon D + r_{\max}) \ \geq\  \e D 
\end{equation*}
Thus $\gamma<\frac{(1+\e)^2}{4\e}$, if $\frac{4}{\e} < D$. 
(In fact 
a more detailed argument shows that $\gamma<\frac{(1+\e)^2}{4\e}$ for $D>\frac{11+8\sqrt{2}}{7} \approx 3.19$ and $0<\e<1$.)

Next let's prove the upper bound in (\ref{eq:ratio-upper-tight}). 
For $D>1.41$ and $0<\e<1$, the function $\mathit{Eff}_{\max}-\gamma$ is monotonically increasing in both $\e$ and $D$. 
Therefore, an upper bound on $\mathit{Eff}_{\max}-\gamma$ can be obtained by letting $\e=1$ and $D\to \infty$.
In this case, $\hat{\e} = r = 1 $. So we have
\begin{equation*}
   \mathit{Eff}_{\max}-\gamma = \frac{rD-D+r\e D}{r^2\e D+r}-\frac{1+\e}{4\e} = \frac{D}{D+1}-\frac12 \to \frac12 
\end{equation*}
The theorem follows.
\qed
\end{proof}

\subsection{Lower Bound on the Efficacy Ratio} \label{lower_bound_ex}

So there is always a priority scheduling mechanism that is both 
$\epsilon$-fair and produces a solution whose social cost is within a factor
$\frac{1}{4\epsilon}+ \left(1+ \frac{\epsilon}{4}\right)$ of optimal.
This upper bound is tight to within an additive term of one half.
In particular, the dominant term $\frac{1}{4\epsilon}$ cannot be improved.
To see this we present an example for which any
priority scheduling mechanism that is $\epsilon$-fair must output a solution whose social cost is at least a factor
$\frac{1}{4\epsilon}+\frac{1}{2}$ greater than optimal.

\begin{restatable}{theorem}{lowersingle}\label{thm:lower-single}
The efficacy ratio of $\mathcal{A}^k$ is at least $\frac{1}{4\epsilon}+\frac{1}{2}$, where $\epsilon=\epsilon_k$.
\end{restatable}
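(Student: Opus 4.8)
The plan is to exhibit an explicit family of instances on which the $\epsilon$-fair mechanism $\mathcal{A}^k$ is forced to pay nearly $\frac{1}{4\epsilon}$ times the optimum, reading off its cost from the closed forms already computed. I would reuse the canonical shape to which the upper-bound analysis of Theorem~\ref{thm:upper-single} reduced every instance: take $n_0$ unit (medium) jobs of size $1$ together with a single large job of size $L$, so $D=n_0+L$, and place $k$ of the unit jobs in the first section. Then the mechanism in play is $\mathcal{A}^k$, and by Theorem~\ref{thm:epfair} it is $\epsilon$-fair with $\epsilon=\epsilon_k=k/D$. Writing $\hat\epsilon=n_0/D$, equations~(\ref{eq:ratio-fair}) and~(\ref{eq:ratio-opt}) give the \emph{exact} efficacy ratio on such an instance,
\[
\frac{c(\mathcal{A}^k)}{c(\mathcal{A}^*)}
=\frac{\epsilon+\hat\epsilon D-\epsilon(1-\hat\epsilon)D+2}{\hat\epsilon^2 D+\hat\epsilon+2},
\]
so the problem collapses to choosing the free parameters $\hat\epsilon$ and $D$, subject to $\epsilon\le\hat\epsilon<1$, so as to push this quantity above $\frac{1}{4\epsilon}+\frac12$.

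Next I would let $D\to\infty$ with $\epsilon$ and $\hat\epsilon$ held fixed. The additive constants wash out and the ratio tends to $\frac{\hat\epsilon(1+\epsilon)-\epsilon}{\hat\epsilon^2}$. Holding $\epsilon$ fixed and maximising this over $\hat\epsilon$ is a one-variable calculus step whose stationarity condition is $(1+\epsilon)\hat\epsilon=2\epsilon$; the optimiser is therefore $\hat\epsilon=\frac{2\epsilon}{1+\epsilon}$, which indeed lies in the admissible range $[\epsilon,1)$ for $0<\epsilon<1$. Substituting back gives the limiting ratio
\[
\frac{(1+\epsilon)^2}{4\epsilon}
=\frac{1}{4\epsilon}+\frac12+\frac{\epsilon}{4}.
\]
This is precisely the quantity that, paired with the upper bound $\frac{1}{4\epsilon}+1+\frac{\epsilon}{4}$ of Theorem~\ref{thm:upper-single}, pins the true efficacy ratio down to within an additive $\frac12$.

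Finally I would convert this limit into the stated inequality by realising the parameters concretely. Fixing the count ratio $k/n_0=\frac{1+\epsilon}{2}$ forces $\hat\epsilon=\frac{2\epsilon}{1+\epsilon}$ exactly, and then setting $L=n_0\cdot\frac{1-\epsilon}{2\epsilon}$ yields $\epsilon_k=\epsilon$ and $D=k/\epsilon$; choosing $n_0$ along the progression that keeps $k=\frac{(1+\epsilon)n_0}{2}$ integral produces a legitimate instance for every large $n_0$, with $D\to\infty$. A short computation shows the finite-$D$ ratio increases monotonically to its limit $\frac{(1+\epsilon)^2}{4\epsilon}$, and since $\frac{(1+\epsilon)^2}{4\epsilon}-\big(\frac{1}{4\epsilon}+\frac12\big)=\frac{\epsilon}{4}>0$, every sufficiently large member of the family already has efficacy ratio strictly above $\frac{1}{4\epsilon}+\frac12$; this gives the theorem. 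Invoking the Pareto-frontier characterisation of Section~\ref{sec:Pareto}, the same bound in fact holds for \emph{every} $\epsilon$-fair priority mechanism, since $\mathcal{A}^k$ is the cheapest $\epsilon$-fair Pareto schedule on this instance. The hard part is the optimisation: one must correctly identify $\hat\epsilon=\frac{2\epsilon}{1+\epsilon}$ as the adversary's best fraction and verify it is admissible; the integrality bookkeeping and the check that convergence is from below are routine by comparison.
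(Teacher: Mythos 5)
Your proposal is correct and follows essentially the same route as the paper's own proof: the same two-size family (unit-size medium jobs plus one large job), the same exact cost formulas (\ref{eq:ratio-fair}) and (\ref{eq:ratio-opt}), and the same limit $D\to\infty$. The only difference is the tuning of the free parameter: the paper fixes $r=2$ (i.e.\ $\hat\epsilon=2\epsilon$), which lands exactly on the stated $\frac{1}{4\epsilon}+\frac12$, whereas your optimized choice $\hat\epsilon=\frac{2\epsilon}{1+\epsilon}$ yields the strictly stronger limiting ratio $\frac{(1+\epsilon)^2}{4\epsilon}=\frac{1}{4\epsilon}+\frac12+\frac{\epsilon}{4}$, closing the gap to the upper bound of Theorem~\ref{thm:upper-single} to exactly $\frac12$.
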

\begin{proof}
To prove this lower bound, it suffices to present an example obtaining this bound. We attempt to construct the example in a way that forces the inequalities arising in the proof of the upper bound, namely Theorem~\ref{thm:upper-single}, to be tight. 
To do this, for any sufficiently large value of $D$ and arbitrary $\e$, we can choose explicitly $r=2$ such that $M = 2\e D$ and $L = D - M > 1$. This means that exactly half of the medium jobs will be in the first section and the remainder will be in the second section; see Figure~\ref{fig:lower bound example}. 

\begin{figure}[h]
    \centering
    \includegraphics[scale=1]{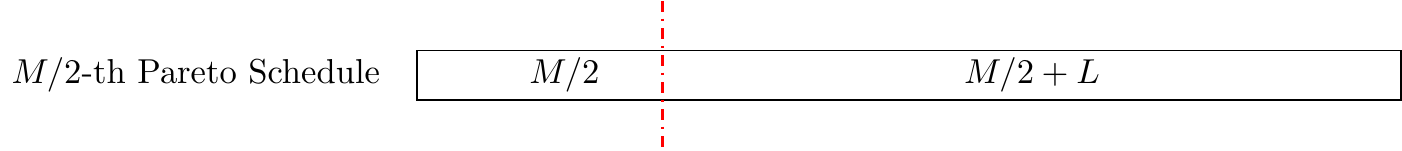}
\caption{The lower bound example.}
\label{fig:lower bound example}
\end{figure}
We then obtain:
$\frac{c(\mathcal{A}^k)}{c(\mathcal{A}^*)} = \frac{1 + rD - (1-r\epsilon)D}{r^2 \epsilon D + r} 
\ =\  \frac{1 + 2D- (1-2\epsilon) D}{2^2 \epsilon D + 2} 
\ =\ \frac{1 + D + 2\epsilon D}{4\epsilon D + 2}$.
When $D$ is sufficiently large, this tends to:
$\frac{c(\mathcal{A}^k)}{c(\mathcal{A}^*)} 
\ \rightarrow \  \frac{D + 2\epsilon D}{4\epsilon D}
\ =\ \frac{1 + 2\epsilon}{4\epsilon}
\ =\  \frac{(1+\epsilon)^2}{4\epsilon} - \frac{\epsilon}{4}
\ =\  \frac{1}{4\epsilon} + \frac{1}{2}.\qed$

\end{proof}

Theorem~\ref{thm:upper-single} and Theorem~\ref{thm:lower-single} apply for the case $\epsilon=\epsilon_k$.
But the bound of $\frac{1}{4\epsilon}$ is tight for any~$\epsilon>0$. In particular, we can choose any {\em target value} $\epsilon>0$ to be our fairness guarantee. 
Then we can still apply our worst case analysis.
This is because we defined $\epsilon_k=\frac{k}{D}$. Hence for any small $\epsilon$, there is a $k$ such that 
$\epsilon < \epsilon_k+\frac{1}{D}$. But recall $D\ge n$. Therefore, if we set a target fairness of $\epsilon$
then we can obtain an even better fairness guarantee of $\epsilon_k < \epsilon$ for no loss in the dominant term of $\frac{1}{4\epsilon}$ in the efficacy ratio (there is an insignificant loss only in the minor terms).

\section{Scheduling on Multiple Machines}\label{sec:multiple}

Ergo, for a single machine there is a scheduling algorithm that is both fair and efficacious. In this section we will extend this result to the 
case of multiple machines. Indeed, in this more complex setting, we will prove that exactly the same performance guarantees still hold.

\subsection{Optimal Scheduling on Multiple Machines}
Our first task is to compute an optimal scheduling mechanism $\mathcal{A}^*$ for minimizing the social cost with multiple machines. We will see that there is a simple algorithm to achieve this: {\em order the jobs by size and place the next job on the least crowded machine}. 
For the purposes of designing a fair algorithm, however, it is important to note that there is a class of randomized algorithms that also guarantee an optimal scheduling. Begin by adding dummy jobs of size zero to ensure there are exactly $n=m\cdot \tau$ jobs. Next, order the jobs by size and then group them into blocks of size $m$. Finally, for each block, independently generate a random perfect matching to assign the $m$ jobs in the block to the $m$ machines.

To analyze this, let $M_r$ be the total size of the jobs in the $r$th block. The social cost is then
$\tau\cdot M_1 +(\tau-1)\cdot M_2 + \cdots +2\cdot M_{\tau-1} + M_{\tau}$.
This is because the jobs in block $r$ are delayed in total by the jobs before them. That this randomized algorithm is optimal follows from the fact 
its deterministic version is optimal, a fact observed by Baker~\cite{Bak74}.
We include a proof for completeness.
\begin{theorem}\label{thm:opt-multiple}
The optimal schedule has cost $\tau\cdot M_1 +(\tau-1)\cdot M_2 + \cdots +2\cdot M_{\tau-1} + M_{\tau}$.
\end{theorem}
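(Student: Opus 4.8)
The plan is to prove the claim in two halves: first that the described schedule attains the stated cost, and second that no schedule can do better. The latter is the real content, since achievability is essentially the computation already carried out before the theorem statement.

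The key reformulation is to rewrite the social cost of \emph{any} schedule as a weighted sum of the job sizes. On a single machine processing jobs $b_1,\dots,b_k$ in order, the total completion time is $\sum_{j=1}^{k}(k-j+1)\,b_j$, exactly as in the derivation of~(\ref{eq:opt}); that is, the job in position $j$ contributes its size times its \emph{depth} $k-j+1$, the number of jobs at or after it on that machine. Summing over machines, the social cost of an arbitrary schedule equals $\sum_i w_i\,d_i$, where $w_i$ is the depth of job $i$ on its own machine. For the proposed schedule, each of the $m$ jobs of block $r$ sits in position $r$ on its machine and hence has depth $\tau-r+1$; therefore its cost is $\sum_{r=1}^{\tau}(\tau-r+1)\sum_h d_{(h,r)}=\sum_{r=1}^{\tau}(\tau-r+1)\,M_r$, exactly the claimed value. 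This settles achievability by direct calculation.

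For optimality I would bound $\sum_i w_i d_i$ from below over all schedules, ordering the jobs so that $d_1\le\cdots\le d_n$. Two observations drive the argument. First, for a fixed multiset of depths, the rearrangement inequality shows $\sum_i w_i d_i$ is minimized by pairing the largest depths with the smallest jobs, so it suffices to lower bound $\sum_i w_{[i]} d_i$ where $w_{[1]}\ge\cdots\ge w_{[n]}$ is the depth multiset sorted in decreasing order. Second, the achievable depth multisets are constrained: a machine holding $k_h$ jobs produces depths exactly $\{1,\dots,k_h\}$, so the number of jobs of depth at most $v$ is $\sum_h\min(k_h,v)\le mv$ for every $v$. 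The balanced profile (each machine holding $\tau$ jobs) attains depths equal to $m$ copies of each of $1,\dots,\tau$ and meets this bound with equality. Combining these, the bound ``at most $mv$ jobs have depth $\le v$'' gives that the $s$-th smallest depth of any schedule is at least $\lceil s/m\rceil$, the $s$-th smallest depth of the balanced profile; hence the sum of the $s$ smallest depths is at least as large for an arbitrary schedule as for the balanced one, for every $s$. Expanding $d_i$ through its nonnegative increments $d_i-d_{i-1}$ (Abel summation) turns these tail-sum inequalities, weighted by nonnegative coefficients, into $\sum_i w_{[i]} d_i \ge \sum_r(\tau-r+1)M_r$, which matches the achievable cost.

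I expect the main obstacle to be the optimality direction, and within it the passage from the load-balancing constraint ``at most $mv$ jobs have depth $\le v$'' to the cost inequality: the rearrangement step and the Abel-summation comparison must be handled with care, because the depth multiset is not of fixed total size across schedules, so this is a genuine domination argument rather than a fixed-sum majorization. Achievability, by contrast, is a one-line computation. An alternative would be Baker's original adjacent-swap exchange argument (shortest-processing-time order within each machine, plus a balancing swap between an overloaded and an underloaded machine), but making the balancing swap rigorous is itself the delicate step, so I prefer the depth-profile argument above.
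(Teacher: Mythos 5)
Your proof is correct, but it takes a genuinely different route from the paper. The paper argues by exchange: assuming some machine holds two jobs of the largest block while another holds none, it swaps one of them with the largest job on the underloaded machine, shows the social cost strictly drops, and then asserts by induction that every block's $m$ jobs must land on $m$ distinct machines (with SPT order within each machine left implicit). Your argument is instead a global depth-profile bound: you relax the problem by allowing any bijection between jobs and the depth multiset $\bigcup_h\{1,\dots,k_h\}$, apply the rearrangement inequality, observe that at most $mv$ jobs can have depth at most $v$ so the sorted depth sequence of any schedule pointwise dominates the balanced one, and conclude. This is essentially the classical Conway--Maxwell--Miller proof that SPT with round-robin assignment is optimal for minimizing total completion time on identical machines. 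What your route buys is rigor and self-containment: it handles the within-machine ordering and the machine loads in one stroke, avoids the delicate ``balancing swap'' whose side effects on other jobs the paper's sketch does not fully track, and directly yields the closed form $\sum_i \lceil (n-i+1)/m\rceil\, d_i$ for the optimum. What the paper's route buys is brevity and a local, structural picture of optimal schedules. One simplification you could make: the comparison of sorted depth sequences is \emph{pointwise} (the $s$-th smallest depth of any schedule is at least $\lceil s/m\rceil$, which is exactly the $s$-th smallest depth of the balanced profile), so after the rearrangement step you can compare term by term and dispense with the Abel-summation/tail-sum machinery entirely.
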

\begin{proof}
Take the optimal solution $\mathcal{A}^*$. We must ensure that $\mathcal{A}^*$ assigns each job in the last block (call these large jobs) to a different machine.
Suppose not then, without loss of generality, Machine 1 has two large jobs (of size $d_1$ and $d_2$) and Machine 2 has no large jobs.
Let the largest job on Machine 2 have size $d_0$; this job exists or the schedule is evidently non-optimal. So $d_0< d_1\le d_2$; observe that the first inequality is strict otherwise we could have used job $0$ as a large job instead of job $1$. Now swap the assignment of job $0$ and job $1$ to create a new schedule $\hat{\mathcal{A}}$. The change in social cost is exactly $d_0-d_1$ because job $2$ is now delayed by a shorter job. Indeed, let $t_1$ denote the finishing time of Machine 1 without considering jobs 1 and 2, and let $t_2$ denote the finishing time of Machine 2 without considering job 0. The sum of social costs of these three jobs before the swap is given by:
$(t_1 + d_1) + (t_1 + d_1 + d_2) + (t_2 + d_0)$.
After the swap this becomes:
$(t_1 + d_0) + (t_1 + d_0 + d_2) + (t_2 + d_1)$.
This argument is illustrated in Figure~\ref{fig:swap}. We conclude that $c(\hat{\mathcal{A}}) < c(\mathcal{A}^*)$, since $d_0 - d_1 < 0$, contradicting the optimality of $\mathcal{A}^*$.
\begin{figure}[h!]
\centering
\includegraphics[scale=0.8]{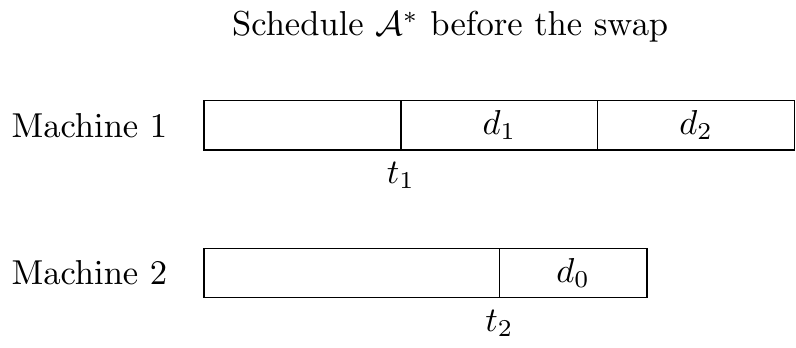}
\hspace{0.5cm}
\includegraphics[scale=0.8]{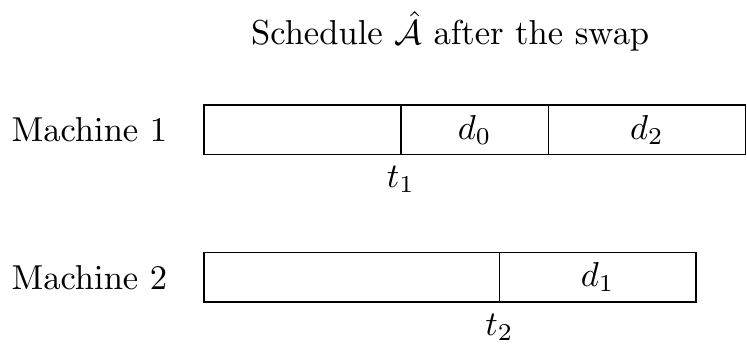}
\caption{Swapping the jobs $d_0$ and $d_1$ yields a Pareto improvement}\label{fig:swap}
\end{figure}

We can now apply induction to show each other block also has its jobs assigned to different machines. 
Thus the cost of the optimal solution is also $\tau\cdot M_1 +(\tau-1)\cdot M_2 + \cdots +2\cdot M_{\tau-1} + M_{\tau}$.
\end{proof}

As alluded to, Theorem~\ref{thm:opt-multiple} implies that there is 
a simple greedy algorithm $\mathcal{A}^*$ to output the optimal schedule.
Simply order the jobs by size and, at each step, place the next job on 
the least crowded machine. 
Observe this greedy algorithm has the property of assigning the jobs in a block to different machines. This is because when a job is assigned to a machine with the lowest load, that machine now has the highest load because the jobs are ordered by size. This can be shown inductively. Indeed, let Machine~1 be the  machine with the lowest load denoted $\ell_1$. Let Machine~2 be the machine with the highest load to which we have just added a job of size $d_2$ and which previously had the lowest load $\ell_2$. So in particular, we have that $\ell_2\le \ell_1$. After adding the next job of size $d_1 \ge d_2$ to Machine~1, it must have higher load than Machine~2, since $\ell_1 + d_1 \ge \ell_2 + d_2$. Consequently, Machine~1 now has the highest load. It follows that every other machine must then receive a job before that machine receives another job.
Hence the greedy algorithm can be viewed as scheduling the blocks in order
via perfect matchings and thus outputs an optimal schedule.
\subsection{Fair Scheduling on Multiple Machines}
Of course, as with a single machine, the optimal schedules for multiple machines may be very unfair. But, we can use the ideas developed for the single machine case to design a fair and efficacious algorithm for the multiple machine case.
First we must define the fairest scheduling mechanism $\mathcal{A}^f$.
Recall a mechanism is fair if it does not use type data. But now, for the multiple machine setting,
there is no longer a unique fair mechanism. For example, scheduling all the jobs (in random order) 
on just one of the $m$ machines is fair! Thus, we define the fairest mechanism $\mathcal{A}^f$ to be the mechanism that minimizes social cost over the set of all fair mechanisms.
In particular, $\mathcal{A}^f$ will assign each job to a random machine and then, for each machine, 
order the jobs assigned to it at random.

Next we must define our scheduling mechanism $\mathcal{A}$. 
Specifically, we use the randomized perfect matching algorithm to assign
$\tau$ jobs to each machine but not to order the jobs on those machines. Instead
to order the jobs on each individual machine, we use the $\epsilon$-fair scheduling algorithm 
developed in Section~\ref{sec:ratio} for a single machine.
We claim $\mathcal{A}$ is $\epsilon$-fair and has efficacy ratio $\Theta(\frac{1}{\e})$
for the multiple machine-scheduling problem.

Let's first show that the $\epsilon$-fairness guarantee still applies.
\begin{theorem}\label{thm:multi-fair}
The multiple-machine scheduling mechanism $\mathcal{A}$ is $\epsilon$-fair.
\end{theorem}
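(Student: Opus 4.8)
The plan is to establish the pointwise bound $f_i(\mathcal{A}) = c_i(\mathcal{A})/c_i(\mathcal{A}^f)\le 1+\epsilon$ for every job $i$, mirroring the single-machine argument of Theorem~\ref{thm:epfair} but now averaging over the \emph{independent} random matchings that assign each block to the machines. Throughout I let blocks $1,\dots,\tau$ be size-ordered with $M_r$ the total size of block $r$, I take the first section to be blocks $1,\dots,t$ (so that on each machine the smallest $t$ of its $\tau$ jobs are run by Smith's rule and the remaining $\tau-t$ at random), I write $F=\sum_{r=1}^t M_r$, and I set $\epsilon=F/D$; this is exactly the single-machine parameter $\epsilon_{tm}$ of Section~\ref{sec:ratio}.

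First I would compute the fair benchmark. Under $\mathcal{A}^f$ every other job shares $i$'s machine with probability $1/m$ and, conditioned on that, precedes $i$ with probability $\tfrac12$; hence
\[
  c_i(\mathcal{A}^f) \ =\ d_i + \tfrac{1}{2m}\,(D-d_i).
\]
Next I would compute $c_i(\mathcal{A})$ by linearity of expectation. The crucial fact is that, since the matchings are independent across blocks, conditioned on $i$ lying on a machine $\mu$ the job that a block $r\ne b$ contributes to $\mu$ is uniform over block $r$, so its expected size is $M_r/m$. I then split the delay of $i$ according to whether its home block $b$ lies in the first or the second section. If $b\le t$ only the strictly smaller blocks $1,\dots,b-1$ precede $i$, giving $c_i(\mathcal{A})=d_i+\tfrac1m\sum_{r<b}M_r$; if $b>t$ the entire first section precedes $i$ while each other second-section block precedes it with probability $\tfrac12$, and --- because block $b$ places only $i$ itself on $\mu$ --- the whole mass $M_b$ (not merely $d_i$) is absent from the randomized part. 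After simplification these give
\begin{align*}
  f_i &= 1 + \frac{2\sum_{r<b}M_r + d_i - D}{D + (2m-1)d_i} \qquad (b\le t),\\
  f_i &= 1 + \frac{F + d_i - M_b}{D + (2m-1)d_i} \qquad (b> t).
\end{align*}

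Finally I would bound both expressions by $1+\epsilon=1+F/D$. The second-section case is immediate: $M_b\ge d_i$ removes the $d_i$ in the numerator and the denominator is at least $D$. The first-section case is the only place needing a short estimate: using $\sum_{r<b}M_r \le F-M_b \le F-d_i$ together with $F\le D$ shows the numerator is at most $F$, which is at most $\epsilon$ times the denominator. Combining the two cases yields $f(\mathcal{A})\le 1+\epsilon$. The main obstacle --- and the one genuinely new feature relative to the single-machine proof --- is getting $c_i(\mathcal{A})$ exactly right: one must average correctly over the block matchings and, in particular, observe that a second-section job has its \emph{entire} home block $M_b$ deleted from the randomized delay rather than just its own size. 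This is precisely what makes the multiple-machine mechanism \emph{at least} as fair as its single-machine counterpart, since the numerator shrinks ($M_b\ge d_i$) while the denominator grows from $D+d_i$ to $D+(2m-1)d_i$, so no additional fairness loss is incurred. Minor points to dispatch cleanly are the degenerate first-section cases where the numerator is non-positive (then $f_i\le 1$ trivially) and the case $t=0$ (then $\epsilon=0$ and every job is in the second section, recovering perfect fairness).
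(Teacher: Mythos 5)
Your proof is correct and follows essentially the same route as the paper's: compute $c_i(\mathcal{A}^f)=d_i+\frac{D-d_i}{2m}$, compute $c_i(\mathcal{A})$ by averaging over the independent block matchings (so each other block contributes expected size $M_r/m$ to job $i$'s machine while $i$'s own block contributes nothing besides $i$ itself), and close using $d_i\le M_b$. You are slightly more careful than the paper in two spots --- you check the first-section jobs explicitly rather than importing Lemma~\ref{lem:fairness-last}, and you note that the entire block mass $M_b$ (not merely $d_i$) is absent from the randomized delay --- but these are refinements of the same argument, not a different one.
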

\begin{proof}
Take any job $i$ belonging to block $r$ which, recall, has total size $M_r$. Let $j(i)$ be the machine that job $i$ is assigned to by the mechanism. 
By construction, no jobs in the block $r$ can delay job $i$, because they must be assigned to a different machine. Furthermore, by the randomization, any job in a lower or higher block is on machine $j(i)$ with probability exactly $1/m$. Hence,
the expected load on machine $j(i)$ is $\frac{D - M_r}{m}+d_i$. 

By Lemma~\ref{lem:fairness-last}, for the worst case fairness we may assume job $i=k+1$ is the $i$th smallest job on machine $j$ and we use the $k$-th Pareto schedule $\mathcal{A}^k$ where
$\epsilon = \epsilon_k = \frac{\sum_{\ell=1}^k d_\ell}{\sum_{\ell=1}^{\tau} d_\ell}$.
Thus the expected cost for job $i$ is
$$c_i(\mathcal{A}) \ = \ \epsilon \left(\frac{D-M_r}{m}\right) + \frac{1}{2} \left( (1- \epsilon) \frac{D-M_r}{m}\right) + d_i.$$
On the other hand, under the fairest mechanism $\mathcal{A}^f$ each task will be assigned to a random machine in a random order. The expected cost for job $i$ with this optimal fairness scheme is 
$c_i(\mathcal{A}^f) \ = \ \frac{D-d_i}{2m}$.
Thus the fairness of mechanism $\mathcal{A}$ to agent $i$ is
\begin{equation*}
    f_i(\mathcal{A})
    = \frac{c_i(\mathcal{A})}{c_i(\mathcal{A}^f)} 
    \ =\ \frac{\epsilon \left(\frac{D-M_r}{m}\right) + \frac{1}{2} \left( (1- \epsilon) \frac{D-M_r}{m}\right) + d_i}{\frac{D-d_i}{2m}+d_i} 
    \le \frac{(1+ \epsilon) \frac{D-M_r}{m} + 2d_i}{\frac{D-M_r}{m}+2d_i} 
    \leq 1+\epsilon.
\end{equation*}
Here the first inequality holds because $d_i\le M_r$, as job $i$ belongs to block $r$.
Thus, the mechanism is $\epsilon$-fair. 
\qed
\end{proof}

\subsection{The Efficacy Ratio for Multiple Machines}
So the proposed scheduling mechanism is $\mathcal{A}$ is $\epsilon$-fair.
Let's now prove that dominant term in the efficacy ratio for $\mathcal{A}$ is 
again exactly $\frac{1}{4\epsilon}$.
\begin{theorem}\label{thm:multi-efficacy}
For machine scheduling on multiple machines, the mechanism $\mathcal{A}$ has efficacy ratio at 
most~$\frac{1}{4\epsilon}+ \left(1+ \frac{\epsilon}{4}\right)$.
\end{theorem}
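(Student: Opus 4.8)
The plan is to reduce the multiple-machine efficacy analysis \emph{exactly} to the single-machine bound of Theorem~\ref{thm:upper-single} via a ``super-job'' instance. Since the jobs are sorted and grouped into $\tau$ blocks of size $m$ with block totals $M_1\le M_2\le\cdots\le M_\tau$ (these are non-decreasing because consecutive blocks are carved from the globally sorted list), I would introduce the auxiliary single-machine instance $\tilde I$ consisting of $\tau$ jobs of sizes $\bar d_r=M_r/m$. By Theorem~\ref{thm:opt-multiple} the optimal multiple-machine cost is $\sum_{r=1}^{\tau}(\tau-r+1)M_r$, which by~(\ref{eq:opt}) is precisely $m$ times the optimal single-machine cost $c(\mathcal{A}^*_{\tilde I})=\sum_{r=1}^{\tau}(\tau-r+1)\bar d_r$. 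Hence $c(\mathcal{A}^*)=m\cdot c(\mathcal{A}^*_{\tilde I})$, and it remains to establish the matching identity for the mechanism.

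The crux is to prove $c(\mathcal{A})=m\cdot c(\mathcal{A}^{k}_{\tilde I})$, where $\mathcal{A}^{k}_{\tilde I}$ is the single-machine $k$-th Pareto schedule that places the $k$ smallest super-jobs, i.e.\ blocks $1,\dots,k$, in the first section. I would compute $c(\mathcal{A})$ by linearity of expectation, summing over blocks. The key structural fact is that the $\tau$ random perfect matchings (one per block) are mutually independent, so the expected size of the block-$s$ job landing on any fixed machine is always $M_s/m$, irrespective of which jobs from other blocks share that machine. Consequently, for a first-section block $r\le k$, each of its $m$ jobs is delayed in expectation by $\frac1m\sum_{s<r}M_s$ (blocks $1,\dots,r-1$ precede it under Smith's rule since blocks are size-ordered), giving block total $\sum_{s\le r}M_s$; for a second-section block $r>k$, each job is delayed by the full first section plus half of the other second-section load, namely $\frac1m\sum_{s\le k}M_s+\frac{1}{2m}\bigl(\sum_{s>k}M_s-M_r\bigr)$, giving block total $\sum_{s\le k}M_s+\tfrac12\sum_{s>k}M_s+\tfrac12 M_r$. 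Matching these term by term against $m$ times the completion time of super-job $r$ under $\mathcal{A}^{k}_{\tilde I}$ establishes the identity.

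With both identities in hand the ratio collapses: $\frac{c(\mathcal{A})}{c(\mathcal{A}^*)}=\frac{c(\mathcal{A}^{k}_{\tilde I})}{c(\mathcal{A}^*_{\tilde I})}$. The fairness parameter of $\mathcal{A}^{k}_{\tilde I}$ on $\tilde I$ is $\epsilon_k=\frac{\sum_{s\le k}\bar d_s}{\sum_s\bar d_s}=\frac{\sum_{s\le k}M_s}{D}$, which is exactly the $\epsilon$ for which $\mathcal{A}$ was shown to be $\epsilon$-fair in Theorem~\ref{thm:multi-fair}. Applying Theorem~\ref{thm:upper-single} to $\tilde I$ therefore bounds the right-hand side by $\frac{1}{4\epsilon}+\bigl(1+\frac{\epsilon}{4}\bigr)$, which is the claim.

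The main obstacle I anticipate is the careful bookkeeping in the expected-cost computation: justifying that the conditional expected size of a block-$s$ job on a machine is unaffected by the presence of the block-$r$ job $i$ (which follows from independence of the matchings \emph{across} blocks), and that the uniform random ordering within the second section makes each other second-section job precede $i$ with probability exactly $\frac12$, independent of the realized sizes. Once this factorization through the per-machine block averages $M_s/m$ is in place, the reduction to the single-machine instance $\tilde I$ is an exact identity rather than an inequality, so no slack is lost and the bound follows immediately from the earlier single-machine results.
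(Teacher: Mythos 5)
Your proposal is correct, and it takes a genuinely different route from the paper. The paper argues per machine and per realization of the random matchings: conditional on the assignment, Theorem~\ref{thm:upper-single} bounds the cost of the Pareto schedule on each machine by the stated factor times Smith's rule on that machine, and by Theorem~\ref{thm:opt-multiple} the per-machine Smith costs sum to the global optimum, so the factor carries over to the total (expectation then preserves it). You instead decompose by block and average over the matching randomness first, establishing the exact identities $c(\mathcal{A}^*)=m\cdot c(\mathcal{A}^*_{\tilde I})$ and $c(\mathcal{A})=m\cdot c(\mathcal{A}^{k}_{\tilde I})$ for the super-job instance $\tilde I$ with sizes $M_r/m$, and then invoke Theorem~\ref{thm:upper-single} once. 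Your block-total computations check out (the independence of the matchings across blocks and of the second-section ordering from the realized sizes is exactly what makes the expectation factorize through $M_s/m$, and $M_1\le\cdots\le M_\tau$ guarantees the first section on every machine is precisely the jobs of blocks $1,\dots,k$). What your route buys is a single, deterministic, instance-level fairness parameter $\epsilon_k=\sum_{s\le k}M_s/D$ and a lossless reduction that transfers the lower bound of Theorem~\ref{thm:lower-single} as well; by contrast, the paper's per-machine argument is shorter but leaves implicit that the $\epsilon_k$ arising on each machine is a random, machine-dependent quantity (an issue it only addresses in a parenthetical remark about varying $k$ across machines). The only caveat is that your exact identity presumes the same $k$ is used on every machine, which is the natural reading of the mechanism when $\epsilon=\epsilon_k$ but should be stated.
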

\begin{proof}
By Theorem~\ref{thm:upper-single}, applying the Pareto optimal scheduling increases the
social cost on each machine by a factor at most $\frac{1}{4\epsilon}+ \left(1+ \frac{\epsilon}{4}\right)$.
But the initial assignment of jobs to machines was optimal by Theorem~\ref{thm:opt-multiple}. Thus mechanism outputs a schedule whose 
total social cost over all machines is at most a factor $\frac{1}{4\epsilon}+ \left(1+ \frac{\epsilon}{4}\right)$ greater than optimal.
\qed
\end{proof}
Again the dominant term $\frac{1}{4\epsilon}$ applies even when the targeted fairness guarantee $\epsilon$
does not coincide with any of the $\epsilon_k$. 
(Note, given a target fairness of $\epsilon$, the mechanism $\mathcal{A}^k$ used may vary on different machines.)
Moreover, this efficacy ratio is tight because
the single-machine setting is a special case of the multiple-machine setting;
so the lower bound of $\frac{1}{4\epsilon}+\frac12$ from Theorem~\ref{thm:lower-single} applies to the multiple-machine setting.

\section{Conclusion}
For the machine scheduling problem, we have shown that the dichotomy between fairness and efficacy can be overcome by allowing for a negligible amount of bias in the mechanism. We conjecture that this paradigm extends to a much broader range of applications and also to other classical models in computer science such as matching, the assignment problem, flows and network routing. Accordingly, investigating this hypothesis
is the most important line of research arising from this paper.

\bibliographystyle{splncs04}
\bibliography{sagtbib}

\end{document}